\documentclass{article}

\usepackage{microtype}
\usepackage{graphicx}
\usepackage{subcaption}
\usepackage{booktabs} 

\usepackage{hyperref}

\newcommand{\myparagraph}[1]{\textbf{#1} \hspace{0.5em}}

\usepackage[accepted]{icml2026}

\usepackage{amsmath}
\usepackage{amssymb}
\usepackage{mathtools}
\usepackage{amsthm}
\usepackage{enumitem}
\usepackage[dvipsnames]{xcolor}

\usepackage[capitalize,noabbrev]{cleveref}

\theoremstyle{plain}
\newtheorem{theorem}{Theorem}[section]

\theoremstyle{definition}
\newtheorem{definition}[theorem]{Definition}

\theoremstyle{remark}

\DeclareMathOperator*{\argmax}{arg\,max}
\DeclareMathOperator*{\argmin}{arg\,min}

\newcommand{\concept}[1]{\texttt{#1}}

\icmltitlerunning{In Defense of Information Leakage in Concept-based Models}

\begin{document}

\twocolumn[
  \icmltitle{In Defense of Information Leakage in Concept-based Models}



  \icmlsetsymbol{equal}{*}

  \begin{icmlauthorlist}
    \icmlauthor{Mateo {Espinosa Zarlenga}}{ox} 
  \end{icmlauthorlist}

  \icmlaffiliation{ox}{University of Oxford, UK}

  \icmlcorrespondingauthor{Mateo {Espinosa Zarlenga}}{mateo.espinosazarlenga@trinity.ox.ac.uk}

  \icmlkeywords{Machine Learning, ICML, Interpretability, Concepts, XAI, Explainable AI, Concept Learning, Information Leakage}
  \vskip 0.3in
]



\printAffiliationsAndNotice{}  

\begin{abstract}
    Concept-based models (CMs), deep neural networks that ground their predictions on representations aligned with human-understandable concepts (e.g., \concept{round}, \concept{stripes}, etc.), have been shown to learn representations that \textit{leak} concept-irrelevant information. As the traditional narrative goes, this leakage is undesirable and should be eradicated as it leads to uninterpretable models. In this paper, we posit that this conventional view of leakage in CMs is not only ill-posed, as the evidence of how leakage makes a model less interpretable is often inconclusive, but also bound to lead to impractical CMs under common real-world constraints. Specifically, we argue that \textit{in real-world settings where concept incompleteness is the norm, some leakage is often necessary for constructing accurate and intervenable CMs}. To this end, we propose that there is such a thing as \textit{benign} leakage and show that, by optimizing a reframing of the typical CM training objective, CMs can encourage and exploit this form of leakage without sacrificing accuracy or intervenability.
\end{abstract}


\section{Introduction}

The field of machine learning is teeming with phenomena whose nature has been misconstrued by early incomplete explanations. Batch Normalization~\citep{batchnorm} is still frequently explained as improving optimization by mitigating internal covariate shift, despite substantial empirical and theoretical evidence that this mechanism is, at best, secondary~\citep{santurkar2018does, bjorck2018understanding, lipton2019troubling_trends_in_ml}. Likewise, generalization phenomena in Deep Neural Networks (DNNs), such as grokking~\citep{grokking}, double descent~\citep{double_descent}, and benign overfitting~\citep{bartlett2020benign}, are often framed as phenomena unique to DNNs, even though closely related behaviors have been demonstrated across a range of model classes (e.g., see~\citealt{wilson2025deep}). A similar pattern appears in common interpretations of attention: attention weights are frequently treated as causal explanations for model predictions, despite strong evidence that they do not, in general, faithfully reflect a model's decision process~\citep{attention_is_not_explanation, is_attention_interpretable, attention_is_not_not_explanation}. This paper aims to prevent the mischaracterization of another phenomenon that risks a similar trajectory, namely the effect of \textit{information leakage} in concept-based models.

Concept-based models (CMs)~\citep{senn, cbm, posthoc_cbm, label_free_cbm}, a rapidly growing class of methods in explainable artificial intelligence (XAI)~\citep{concept_xai_survey}, are models that ground their downstream predictions in representations aligned with human-understandable concepts. A growing body of work has shown that the concept representations learned by CMs are often prone to \textit{information leakage}, in which concept representations encode information that is not strictly attributable to their intended semantics~\citep{margeloiu2021concept, promises_and_pitfalls}. The prevailing narrative treats such leakage as inherently undesirable and calls for its elimination, typically arguing that leakage undermines the interpretability of CMs~\citep{glancenets, leakage_free_cbms, metric_paper, leakage_and_interpretability_in_concept_models}. In this paper, we argue that this strict view of leakage is ill-posed and counterproductive.

Importantly, we do not claim that leakage is never harmful. Rather, we argue that treating leakage as a purely negative and uncontrollable property obscures important distinctions and leads to design choices that render CMs impractical under common real-world constraints. In particular, in settings where \textit{concept incompleteness} is the norm, enforcing strictly non-leaky representations can preclude both high task accuracy and effective test-time interventions. As there is significant evidence that both of these properties are attainable even in the presence of leakage, we take the position that \textbf{not all forms of leakage are malign, and that a controlled form of \textit{benign} leakage can be necessary and desirable for constructing accurate and intervenable CMs}.

To substantiate this position, we first describe a general framework for analyzing CMs (Sec.~\ref{sec:background}) and use it to formalize information leakage (Sec.~\ref{sec:leakage_definition}). Then, we summarize the dominant arguments against leakage (Sec.~\ref{sec:alternative_views}) and make the case that some degree of leakage, which we define as \textit{benign leakage}, is often desirable if CMs are to remain useful in incomplete real-world settings (Sec.~\ref{sec:the_case_for_leakage}). Finally, we show that this form of benign leakage can be compatible with the core desiderata of CMs, including task fidelity and intervenability, provided that CMs are trained to minimize their task loss when all concepts are intervened (Sec.~\ref{sec:revisiting_arguments_against_leakage}). In doing so, we show that, contrary to common narratives, leakage does not imply that a CM loses the characteristics typically associated with its interpretability and may, instead, be necessary for building accurate CMs in real-world conditions.


\section{Background}
\label{sec:background}

A concept-based model (CM) $\eta : \mathcal{X} \rightarrow \mathcal{Y} \times \mathcal{C}$ is a DNN that maps features $\mathbf{x} \in \mathcal{X} \subseteq \mathbb{R}^n$ (e.g., pixels) to a \textit{downstream task prediction} $\hat{\mathbf{y}} \in \mathcal{Y}$ (e.g., bird species) and a \textit{concept-based explanation} $\hat{\mathbf{p}} \in \mathcal{C}$ (e.g., \concept{black\_wings}, \concept{medium\_sized}). Here, without loss of generality, we assume $\hat{\mathbf{y}} \in [0,1]^L$ is a distribution over $L$ labels and $\hat{\mathbf{p}} \in [0,1]^k$ is a vector of concept scores, with $\hat{\mathbf{p}}_i$ approximating the probability that concept $C_i$ is present in $\mathbf{x}$. Traditionally, most CMs assume access to a training dataset $\mathcal{D} = \{(\mathbf{x}^{(j)}, \mathbf{c}^{(j)}, y^{(j)})\}_{j=1}^N$, where each input is annotated with a downstream label $y^{(j)} \in \{1,\dots,L\}$ and a vector of $k$ binary concept annotations $\mathbf{c}^{(j)} \in \{0,1\}^k$. To obtain this dataset, concept annotations may be provided by experts (e.g., radiologists' notes) or obtained via \textit{label-free} pipelines~\citep{label_free_cbm, labo, discover_then_name_concept_bottlenecks}. Considering this setup, we now describe CBMs, a unifying framework for studying CMs.

\myparagraph{Concept Bottleneck Models}
Given a concept-annotated dataset $\mathcal{D}$, most CMs can be framed as a Concept Bottleneck Model (CBM)~\citep{cbm}. In its general form, seen in Figure~\ref{fig:cbm}, a CBM is a pair of functions $M = (g,f)$, both parameterized as DNNs, supported by a \textit{scoring function} $s : \mathbb{R}^{k \times m} \rightarrow [0,1]^k$. The first function $g : \mathbb{R}^n \rightarrow \mathbb{R}^{k \times m}$, called the \textit{concept encoder}, maps input features $\mathbf{x}$ to $k$ concept representations $\hat{\mathbf{c}} = (\hat{\mathbf{c}}_1,\dots,\hat{\mathbf{c}}_k)$, where $\hat{\mathbf{c}}_i \in \mathbb{R}^m$. CBMs learn $\hat{\mathbf{c}}$ such that the $i$-th output of the scoring function $s(\hat{\mathbf{c}}) = \hat{\mathbf{p}}\in [0, 1]^k$, namely $ s_i(\hat{\mathbf{c}}) :=  \hat{\mathbf{p}}_i$, approximates $\mathbb{P}(C_i = 1 \mid X=\mathbf{x})$. Then, the \textit{label predictor} $f : \mathbb{R}^{k \times m} \rightarrow [0, 1]^L$ takes the representations $\hat{\mathbf{c}}$ and predicts a downstream task label distribution $\hat{\mathbf{y}} = f(\hat{\mathbf{c}})$.

\begin{figure}[t!]
    \centering
    \includegraphics[width=1.1\columnwidth]{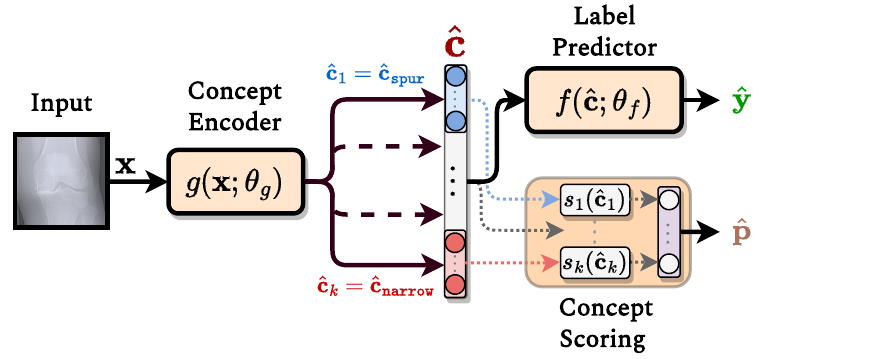}
    \caption{
        A generalized Concept Bottleneck Model (CBM).
    }
    \label{fig:cbm}
\end{figure}

Together, the composition $\eta = (f \circ g)$ forms a predictor $\mathbf{x} \mapsto \hat{\mathbf{y}}$ whose output can be explained by the \textit{concept bottleneck}'s scores $\hat{\mathbf{p}} = s(g(\mathbf{x}))$. Therefore, when trained to align $\hat{\mathbf{p}}$ with $\mathbf{c}$, $\eta$ is considered interpretable. In practice, this is achieved by learning the parameters of $g$ and $f$ such that a combination of a task loss $\mathcal{L}_y(f(g(\mathbf{x})), y)$ (e.g., cross-entropy) and a concept loss $\mathcal{L}_c(s(g(\mathbf{x})), \mathbf{c})$ (e.g., binary cross-entropy)~\citep{cbm} is minimized. These losses may be optimized \textit{independently}, \textit{sequentially} (training $g$ before $f$), or \textit{jointly} (minimizing a weighted sum). 

The simplest instantiation of this framework is a \textit{Sigmoidal CBM}~\citep{cbm}, where each concept is represented by a single sigmoidal scalar ($m=1$) and $s$ is the identity. More recent works have explored richer representations and scoring mechanisms, including embedding-based concept representations~\citep{cem, probcbm, energy_based_cbms, mixcem}, logit-based~\citep{cbm} or unbounded scores~\citep{posthoc_cbm, label_free_cbm}, and bottlenecks augmented with residual or unsupervised side-channels~\citep{promises_and_pitfalls, leakage_free_cbms, semi_supervised_cbm}.

\myparagraph{Concept Interventions}
CMs that can be framed as CBMs support test-time feedback via \textit{concept interventions}~\citep{cbm}. An intervention on concept $i$, denoted as $g(\mathbf{x} \mid C_i := v)$, consists of modifying the concept representation $\hat{\mathbf{c}}_i$ such that it corresponds to a desired concept value $v \in [0,1]$, typically by setting $\hat{\mathbf{c}}_i := s_i^{-1}(v)$ when $s$ is invertible. The modified representation is then propagated to the label predictor, which may in turn update its task prediction.

Concept interventions are commonly framed as a mechanism for \textit{correcting mispredicted concepts}: when predicting a high-level concept is easier than predicting the downstream task (e.g., \concept{black\_wings} versus bird species), a user can correct erroneous concept predictions, thereby improving task performance. However, this view understates their broader utility. Concept interventions also provide a natural interface for supplying \textit{high-level hints} to the model, expressed in a vocabulary \textit{shared} by experts and the model itself (i.e., the concepts). For example, through an intervention, a radiologist may indicate the presence of ``bone spurs'' on an X-ray before the CM processes the input features, effectively conditioning the CM's inference on prior knowledge that cannot be easily communicated through low-level feature manipulations (e.g., via pixel-level manipulations). More recent works have explored when and where interventions should be made~\citep{closer_look_at_interventions, coop, deferring_cbms}, as well as how to increase the effectiveness of interventions~\citep{intcem, learning_to_intervene, stochastic_concept_bottleneck_models}.

\myparagraph{Desiderata of CMs}
CMs are typically designed to achieve predictive accuracy and interpretability. Although there is a lack of a task-agnostic, measurable definition of ``interpretability''~\citep{finale_interpretable_ml}, the existing literature on CMs commonly evaluates these two properties based on three measurable proxy metrics:
\begin{enumerate}[itemsep=-3pt,leftmargin=*]
    \item \textit{Task Fidelity}: how accurate are the CM's downstream task predictions? (i.e., is $f(g(\mathbf{x})) \approx y$?)
    \item \textit{Concept Fidelity}: how accurate are the CM's  explanations? (i.e., is $s_i(g(\mathbf{x})) \approx \mathbf{c}_i$ for all $i\in \{1, \cdots, k\}$?)
    \item \textit{Intervenability}~\citep{make_black_box_models_intervenable}: when provided with ground-truth concepts $\mathbf{c}$, is the CM's task prediction the same as an expert would provide for $\mathbf{c}$? (i.e., does task fidelity increase as we intervene on more concepts?)
\end{enumerate}
While task and concept fidelity are expected desiderata, intervenability plays a distinct role in CMs: it provides an operational test of whether the model correctly uses concepts when making predictions. By examining how predictions change under concept interventions, one can assess whether the model's reasoning aligns with that of the experts who annotated a test set with concepts. Therefore, intervenability allows us to discriminate between a CM that treats concept predictions as outputs disconnected from task inference and a model that reasons about its downstream task prediction based on the high-level concepts it has at its disposal.

Having established a framework and a set of desiderata for studying CMs, we now define \textit{information leakage} in CMs.


\section{Defining Information Leakage in CMs}
\label{sec:leakage_definition}

Recent works have shown that a CM's concept representations $\hat{\mathbf{c}} = g(\mathbf{x})$ may encode information that is not strictly attributable to their intended semantics~\citep{margeloiu2021concept, promises_and_pitfalls}. For instance, by analyzing saliency maps of concept encoders in CBMs, \citet{margeloiu2021concept} showed that high concept accuracy does not imply that a concept predictor relies exclusively on concept-specific features. Follow-up work by \citet{promises_and_pitfalls} demonstrated that representing concepts via \emph{soft} (continuous) representations (e.g., logits or probabilities) can allow a label predictor to exploit information about other concepts or the downstream task. Related work further showed that such representations may fail to respect the spatial \emph{locality} of concepts in the input~\citep{cbm_locality}. Collectively, these observations motivate the view that accurate concept prediction alone does not guarantee that concept representations form an effective bottleneck~\citep{there_was_never_a_bottleneck}.

\myparagraph{Forms of Leakage}
Previous works~\citep{leakage_and_interpretability_in_concept_models, measuring_leakage_in_cms} distinguish between two forms of leakage (represented in Figure~\ref{fig:different_kinds_of_leakage}). \emph{Inter-concept leakage} refers to the case where the representation $\hat{\mathbf{c}}_i$ of concept $c_i$ contains information about concept $c_j$ beyond what is present in the ground-truth concept labels. Formally, this implies that $I(\hat{C}_i; C_j) > I(C_i; C_j)$, where $\hat{C}_i$ and $C_i$ denote the random variables associated with the representation and ground-truth label of concept $c_i$, respectively, and $I(\cdot; \cdot)$ is the mutual information. In contrast, \emph{task leakage} refers to a concept representation encoding information on $Y$ that is not attributable to the concept itself, i.e.,  when $I(\hat{C}_i; Y) > I(C_i; Y)$, with $Y$ denoting the downstream task label. Notice that, in practice, $\hat{C}_i$ is typically continuous and high-dimensional. Therefore, the quantities characterizing both forms of leakage can only be estimated~\citep{measuring_leakage_in_cms, leakage_and_interpretability_in_concept_models} or assessed via proxies~\citep{promises_and_pitfalls}.

\myparagraph{Leakage via Bypasses and Side Channels}
Leakage may also arise from modeling choices in the label predictor. In particular, several CMs allow the predictor $f(\cdot)$ to exploit information outside the concept bottleneck $\hat{\mathbf{c}}$ via bypasses or residual connections of the form $f(\hat{\mathbf{c}}, \psi(\mathbf{x}))$, where $\psi(\mathbf{x})$ is an unconstrained representation of the input~\citep{cbms_with_unsupervised_concepts, posthoc_cbm, leakage_free_cbms}. For example, Hybrid CBMs~\citep{promises_and_pitfalls} explicitly add $k^\prime$ unsupervised activations $\psi(\mathbf{x}) \in \mathbb{R}^{k^\prime}$ to the bottleneck $\hat{\mathbf{c}}$, which $f$ can exploit to make its label prediction. Such designs are typically motivated as pragmatic responses to \textit{incompleteness} in the concept annotation set $\mathbf{c}$. In those instances, restricting the label predictor to operate on the bottleneck alone can substantially degrade task fidelity~\citep{cem}. From the perspective of leakage, such bypasses constitute a direct route for task-relevant information to influence predictions. Accordingly, we treat $\psi(\mathbf{x})$ as a \emph{shared concept representation} that may carry both inter-concept and task leakage.

Now that we have formalized what leakage is, we consider how leakage has been framed in the existing literature.


\section{Alternative Views}
\label{sec:alternative_views}

The CM literature holds the prevailing view that leakage is a purely negative property of CMs that should be avoided whenever possible. This has motivated methods for preventing leakage~\citep{glancenets}, mitigating its presence~\citep{leakage_free_cbms, coop_cbm, eliminating_information_leakage_with_hierarchical_concepts, tree_based_leakage_control, there_was_never_a_bottleneck}, and quantifying it~\citep{metric_paper, mitigating_reasoning_shortcuts, leakage_and_interpretability_in_concept_models, measuring_leakage_in_cms, concept_based_xai_metrics_and_benchmarks}. Before making our case \textit{for} leakage, we summarize the arguments commonly put forth against leakage by splitting them into three types of claims: intervenability, interpretability, and safety claims.

\begin{figure}[h!]
    \centering
    \includegraphics[width=\columnwidth]{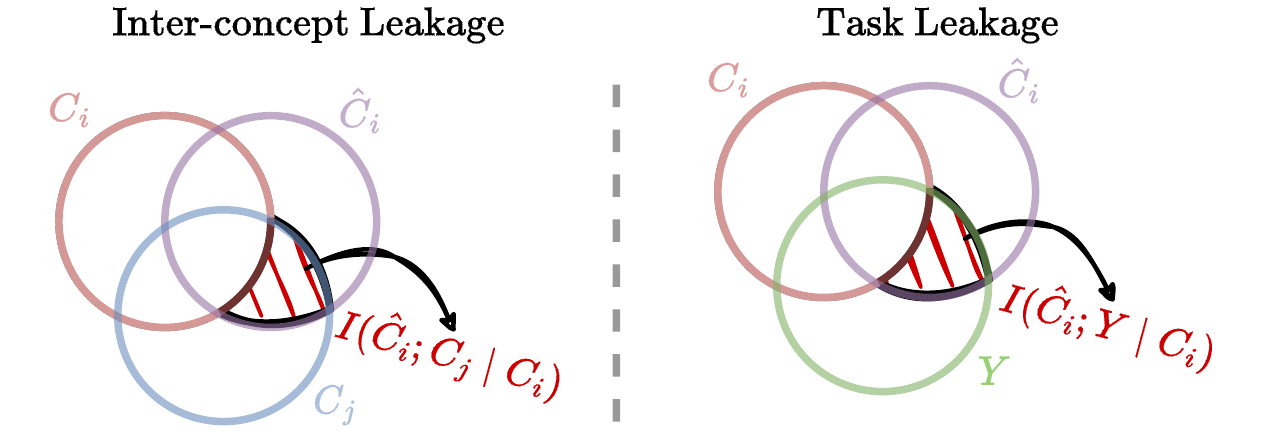}
    \caption{Leakage happens when the concept representation $\hat{C}_i$ encodes information that is not attributable to the ground-truth concept $C_i$. This additional information can be attributed to \textbf{(left)} another concept $C_j$ or \textbf{(right)} the downstream task $Y$.}
    \label{fig:different_kinds_of_leakage}
\end{figure}

\myparagraph{Intervenability Claims}
A first line of argument holds that leakage undermines the \emph{intervenability} of CMs~\citep{leakage_free_cbms, metric_paper, stochastic_concept_bottleneck_models, tree_based_leakage_control, eliminating_information_leakage_with_hierarchical_concepts, there_was_never_a_bottleneck}. These claims argue that, when concept representations encode additional information beyond their intended semantics, interventions may have unpredictable effects on downstream predictions as they may inadvertently modify that latent information~\citep{leakage_free_cbms, metric_paper, leakage_and_interpretability_in_concept_models}. From this perspective, leakage seems to threaten the expectation that providing correct concept labels to a CM at test time should reliably improve its task accuracy.

\myparagraph{Interpretability Claims}
A second class of arguments asserts that leakage renders CMs fundamentally ``uninterpretable''~\citep{margeloiu2021concept, promises_and_pitfalls, glancenets, tree_based_leakage_control, survey_on_risks_and_limitations_of_cms, there_was_never_a_bottleneck, measuring_leakage_in_cms, eliminating_information_leakage_with_hierarchical_concepts, leakage_and_interpretability_in_concept_models}. These claims argue that, if a concept representation encodes information beyond its corresponding ground-truth concept, the label predictor can no longer be said to reason \emph{only} in terms of known concepts. Nevertheless, given the lack of an agreed, measurable definition of interpretability in the context where these claims are made, these statements are often \textit{ill-posed}. This is because it is impossible to quantify or falsify any interpretability claim without first defining under what view of interpretability we should study leakage.
Hence, when evaluating these claims later, we instead measure interpretability using commonly used proxy metrics for CMs (e.g., concept fidelity, intervenability, and label-predictor weights).

A related interpretability concern is that leakage may erode the \emph{inference value} of CMs: if the label predictor relies on information outside the concept-aligned components of $\hat{\mathbf{c}}$, a user inspecting $\hat{\mathbf{p}}$ can no longer fully reconstruct the model's reasoning~\citep{leakage_free_cbms, there_was_never_a_bottleneck, leakage_and_interpretability_in_concept_models}. However, as we point out in Sections~\ref{sec:the_case_for_leakage} and~\ref{sec:revisiting_interpretability}, this is an unavoidable practical limitation of any CM, leaky or non-leaky, whenever the training concept set is \textit{incomplete} (in that case, even non-leaky CMs cannot fully explain their predictions). As such, these arguments relate to a general issue with using CMs for inference in incomplete settings, rather than one unique to leaky models.

\myparagraph{Safety Claims}
Finally, a small number of works argue that leakage poses \emph{safety} risks~\citep{glancenets, mitigating_reasoning_shortcuts, leakage_and_interpretability_in_concept_models}.
Specifically, leakage may severely affect intervenability when distribution shifts occur~\citep{mixcem} and it may also facilitate \textit{shortcut learning}~\citep{shortcut_learning}, where the label predictor learns to exploit spurious correlations encoded in the representation. Among these claims, concerns about shortcut learning are especially important, as they may lead to models that fail to generalize and violate desirable notions of fairness and privacy~\citep{dwork2012fairness_through_awareness, pessach2022review_of_fairness}. Nevertheless, while deserving of study in CMs, shortcuts are not unique to CMs or to concept representations. Rather, they reflect general limitations of DNN-based representation learning systems. Hence, conflating leakage with shortcut learning risks obscuring the true origin of these shortcuts and disconnecting their study from the substantial body of work on robustness~\citep{sagawa_gdro_bias_mitigation_19, kim_multiaccuracy_bias_mitigation_19, sohoni_george_bias_mitigation_neurips_20}. Because of this, while we believe these concerns are legitimate, we do not frame them as a CM-specific limitation, and we will not aim to disprove their validity in this work. Instead, we focus on CM-specific limitations arising from leakage.

Before rebutting the commonly held claims against leakage presented above, we first put forth our case \textit{for} leakage.


\section{The Case for Leakage}
\label{sec:the_case_for_leakage}

It is a common oversimplification to assume that all concept-supervised datasets are equal. However, the nature of the training concept annotations is crucial for the end quality of a CM~\cite{human_uncertainty_concepts, concept_correlation_effects, preference_optimization_cbms}. This is because some concepts are (1) inherently more informative about the downstream task (e.g., \concept{stripes} and \concept{fur} are more helpful in describing an animal's species than \concept{quadruped}), (2) redundant (e.g., \concept{black} entails \concept{dark\_color}), and (3) \textit{necessary} for accurately predicting a downstream task (e.g., \concept{scales} is required to classify an animal as a reptile).

\begin{figure}[t!]
    \centering
    \includegraphics[width=0.84\columnwidth]{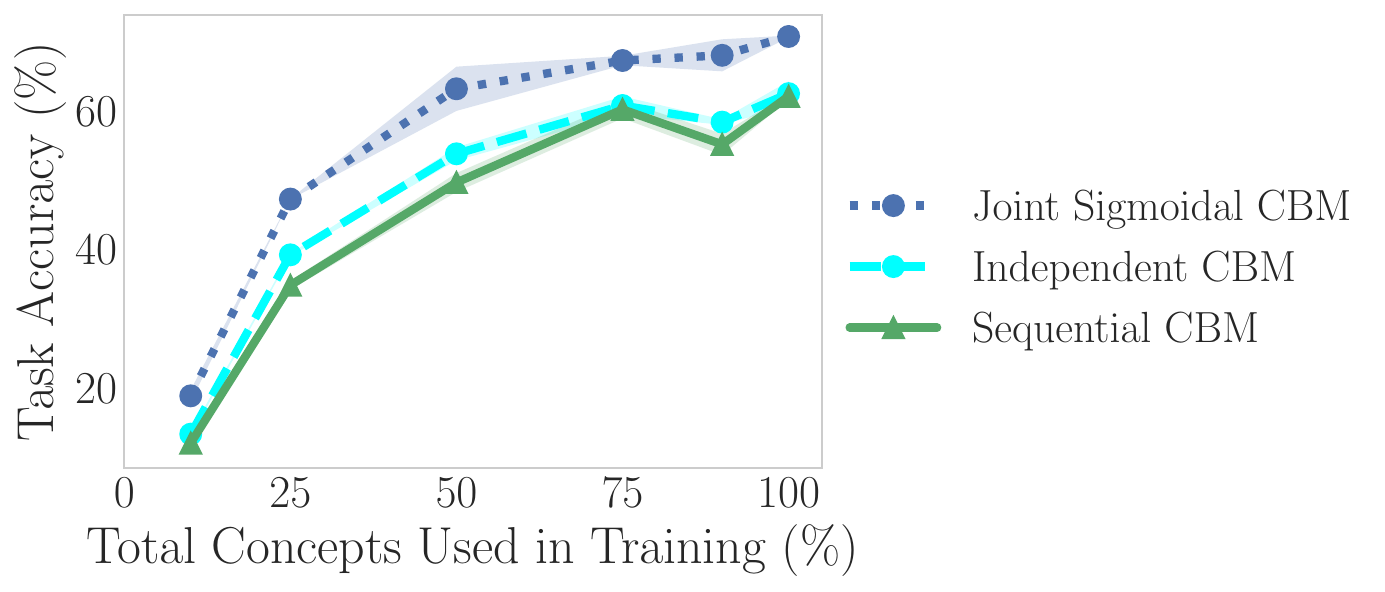}
    \caption{Task fidelity of sigmoidal (i.e., soft) CBMs as we vary the number $k$ of training concepts on the CUB~\citep{cub} dataset (see App.~\ref{app:experimental_details} for details). CBMs that limit task leakage (e.g., independent CBMs) are more affected by incompleteness.}
    \label{fig:cbm_tradeoff_task}
\end{figure}

Of these constraints, the most limiting is perhaps the latter. When the training concepts $\mathbf{c} \sim \mathbb{P}(C)$ lack a concept crucial for predicting a downstream task (as \concept{scales} is for predicting reptiles), a CM may fail to achieve high task fidelity. This is particularly true if the CM's label predictor is constrained to make its prediction based only on representations that are direct proxies of the concepts' probabilities, as is the case in popular sigmoidal/soft CBMs (see Figure~\ref{fig:cbm_tradeoff_task}).

Therefore, underlying the setup for most CMs, there is an implicit assumption that the set of training concepts $\mathbf{c}$ is a \textit{complete} description~\citep{concept_completeness} of the downstream task $y$. Formally, this means CMs require that the mutual information between the downstream label $y \sim \mathbb{P}(Y)$ and the input features $\mathbf{x}\sim\mathbb{P}(X)$ given the concepts $\mathbf{c}\sim\mathbb{P}(C)$ is close to zero (i.e., $I({Y}; \; {X} \mid C) \approx 0$).

In real-world concept-annotated datasets, however, \textit{incompleteness is the typical case}. This is because it is not only costly and difficult to obtain large sets of concepts for every training sample, but it is also challenging to identify, a priori, all the important concepts one may need for future downstream tasks of interest. Moreover, although useful, label-free concept discovery pipelines are not a \textit{sufficient} solution for incompleteness: as seen in these works' own evaluations, label-free models still tend to trail behind opaque DNNs, implying their concept sets $C$ lack information that is present in $X$ (e.g., Table 1 in~\citealt{posthoc_cbm}, Table 2 in~\citealt{label_free_cbm}, and Table 1 in~\citealt{discover_then_name_concept_bottlenecks}). Furthermore, these approaches depend on the availability of (1) concepts that can be specified in natural language, and (2) foundation models that can reason about even highly domain-specific concepts and modalities, requirements which are not applicable for specialized tasks.

Taking the weak assumption that sacrificing task fidelity is highly undesirable even if this means higher interpretability, something observed across user studies~\citep{papenmeier2019model, nussberger2022public}, we are then confronted by the following challenge: if we want CMs to become practical for real-world tasks where incompleteness is the norm, then these models must have a mechanism for the label predictor $f(\hat{\mathbf{c}})$ to access task-relevant information in the input features $\mathbf{x}$ that can never be present in a ``pure'' representation of the ground-truth concepts $\mathbf{c}$. In other words, we can only attain useful CMs in real-world incomplete settings if we encourage task leakage to capture  $I({Y}; \; {X} \mid C)$.

Should we therefore enable leakage in our concept representations and use it to our advantage? Or should we declare CMs potentially unsuitable for most realistic settings, where incompleteness is the norm? Here, we argue that we should enable leakage, as without it, we cannot construct CMs that can perform the one thing, for better or for worse, expected of statistical models: to be accurate on their downstream task. Concretely, however, we argue that we should encourage a specific kind of leakage that, when properly enabled, can be exploited by a CM without compromising its expected desiderata. We call this form \textit{benign leakage}.

\subsection{Benign Leakage}
\label{sec:benign_leakage}

Informally, representations $\hat{C}$ exhibit \emph{benign leakage} if (1) they preserve enough information in them to cover for $I(Y; X\mid C)$ when $C$ is incomplete, and (2) they can be decomposed into a \textit{concept-specific} component $\bar{C}_i$, which is the component one modifies when an intervention is performed, and \textit{residual} component $R_i$ orthogonal to $\bar{C}_i$.

\begin{definition}[Benign leakage]
\label{def:benign_leakage}
Let $\hat{C} = (\hat{C}_1,\ldots,\hat{C}_k)$ be the concept representations produced by a CM, and assume that for each $i$ there exists a decomposition $\hat{C}_i \equiv (\bar{C}_i, R_i)$, where $\bar{C}_i$ is the concept-aligned component and $R_i$ is a residual component. Let $R := (R_1,\ldots,R_k)$. We say that $\hat{C}$ exhibits \emph{benign leakage} if the following conditions hold:
\vspace{-0.7em}
\begin{enumerate}[itemsep=-1pt,leftmargin=*]
    \item[i.] \textbf{Sufficiency:} $I(Y; R \mid C) \approx I(Y; X \mid C)$
    \item[ii.] \textbf{Localization:} $I\!\left(Y; \bar{C}_i \mid R_i, \hat{C}_{-i}\right) \approx I\!\left(Y; C_i \mid C_{-i}\right)$
\end{enumerate}
\vspace{-1em}
\noindent where $C_{-i} = (C_1, \cdots, C_{i-1}, 
C_{i+1}, \cdots, C_k)$.
\end{definition}

Intuitively, sufficiency ensures that all label-relevant information not captured by $C$ is preserved in $\hat{C}$. In contrast, localization ensures that the information about a concept $C_i$ that is informative for predicting $Y$ can be \textit{localized} within the $\bar{C}_i$ component of $\hat{C}_i$. Therefore, when both conditions are satisfied, a label predictor $f(\hat{\mathbf{c}})$ trained on representations with benign leakage can achieve high task fidelity in incompleteness, by exploiting information in both $R$ and $\bar{C} = (\bar{C}_1, \cdots, \bar{C}_k)$, and be intervenable, by ensuring it uses the well-localized variables $\bar{C}$ to access information in $C$.

We point out that without considering how $f$ operates or learns, we cannot make claims about a CBM's intervenability based only on properties of $\hat{C}$. For example, a Hybrid CBM may very well exhibit benign leakage, as it cleanly separates the activations that encode concept predictions from those that carry additional leakage. Yet, there is strong evidence that these models, trained only on a joint CBM loss, are not intervenable~\citep{cem}. Because of this, benign leakage, or, for that matter, any property of the concept predictions alone, cannot alone imply that a CM is intervenable, as intervenability is also a property of the label predictor. Nevertheless, what benign leakage allows us to ascertain is that well-conditioned label predictors \textit{can} be accurate and intervenable (i.e., it is a \textit{necessary} condition for task fidelity and intervenability).

\myparagraph{Optimizing for Benign Leakage}
Given our definition of benign leakage, we notice that, under standard well-specification assumptions (see App.~\ref{app:proof} for proof), achieving sufficiency for a CBM $(g, f)$ is equivalent to minimizing the negative task likelihood under full concept interventions:
\begin{align*}
    \label{eq:intervened-loss}
    \begin{split}
    \mathcal{L}_\text{int} &= \mathbb{E}_{(\mathbf{x}, \mathbf{c}, y) \sim \mathcal{D}}\big[-\log \mathbb{P}_f\big(y \mid \hat{C}=g(\mathbf{x} \mid C := \mathbf{c})\big)\big] \\
    &= \mathbb{E}_{(\mathbf{x}, \mathbf{c}, y) \sim \mathcal{D}}\Big[\mathcal{L}_y\Big(f\big(g(\mathbf{x} \mid C := \mathbf{c})\big), y \Big)\Big]
    \end{split}
\end{align*}
Intuitively, $\mathcal{L}_\text{int}$ can be seen as a generalization of independent CBM training. As such, $\mathcal{L}_\text{int}$ is similar to the ``prior loss'' used in \citep{mixcem} to learn robust concept representations. The difference lies in the fact that $\mathcal{L}_\text{int}$ applies to any CM, not just those that decompose their representations into exogenous and endogenous variables. This makes $\mathcal{L}_\text{int}$, which only requires an additional forward pass of $f(\cdot)$, an \textit{efficient} way to ensure sufficiency\footnote{For example, introducing $\mathcal{L}_\text{int}$ when training a CEM~\citep{cem}, without much code optimization, increased the total training time by only $10\%$.}.

In contrast, ensuring localization is not as easy as ensuring sufficiency. When one can explicitly decompose a CM's concept representation $\hat{C}_i$ into a concept-aligned component $\bar{C}_i$ and a residual ``leaky'' component $R_i$ (e.g., as in Hybrid CBMs~\citep{promises_and_pitfalls}, Residual PCBMs~\citep{posthoc_cbm}, and MixCEMs~\citep{mixcem}), localization may be directly optimized if the mutual information terms are quantities that can be estimated (a likely intractable problem if $\hat{C}$ is high-dimensional and not well-constrained). If a CM does not explicitly decompose its concept representations, then optimizing directly for localization becomes even more difficult with an architecture-agnostic regularizer such as $\mathcal{L}_\text{int}$. Nevertheless, we can still introduce implicit incentives that encourage localization. As we argue in the next section, due to the tendency of DNNs to learn simpler hypotheses, such an incentive may be a by-product of minimizing the sufficiency regularizer $\mathcal{L}_\text{int}$.

\section{Revisiting the Arguments Against Leakage}
\label{sec:revisiting_arguments_against_leakage}

Having argued that benign leakage is necessary for building accurate and intervenable CMs that can operate in incompleteness, we now make the case that information leakage does not \textit{necessarily} affect the CM's intervenability or interpretability (under traditional proxy metrics).

\subsection{Revisiting Intervenability Claims}
\label{sec:revisiting_intervenability}

The first evidence against the common claim that information leakage leads to unintervenable CMs comes from previous works themselves: as recreated in Figure~\ref{fig:intervenability_leaky_in_real_world} (left), several CMs designed to enable leakage, either by using dynamic embedding representations, such as Concept Embedding Models (CEMs)~\citep{cem} and Probabilistic CBMs (ProbCBMs)~\citep{probcbm}, or by using \textit{small} residual bypasses, such as Hybrid CBMs~\citep{promises_and_pitfalls}, are more or similarly intervenable to CBMs commonly agreed to have very minimal leakage (i.e., independently-trained sigmoidal CBMs). Similar results have been observed for other leakage-enabling CMs such as residual Autoregressive CBMs~\citep{leakage_free_cbms} (and related information-theoretic ways for selecting maximally informative concept subsets~\citep{chattopadhyay2022interpretable}), Semi-supervised CBMs~\citep{semi_supervised_cbm}, and further variants of CEMs~\citep{intcem, mixcem}. Therefore, the common claim that leakage \textit{necessarily} precludes a CM's intervenability is false.

It is important to notice, however, that it is true leakage \textit{can} affect intervenability: as seen in Figure~\ref{fig:intervenability_leaky_in_real_world} (right), some CMs that enable leakage become almost entirely unintervenable when facing incompleteness (something we do not necessarily see when they are deployed in an equivalent, but complete setting as seen in App.~\ref{app:completeness_experiments}). Therefore, this begs the question: \textit{when is a leakage-enabling CM intervenable?} As discussed next, this is possible when a CM (sometimes implicitly) optimizes for sufficiency  (i.e., $\mathcal{L}_\text{int}$).

\begin{figure}[h!]
    \centering
    \includegraphics[width=0.99\columnwidth]{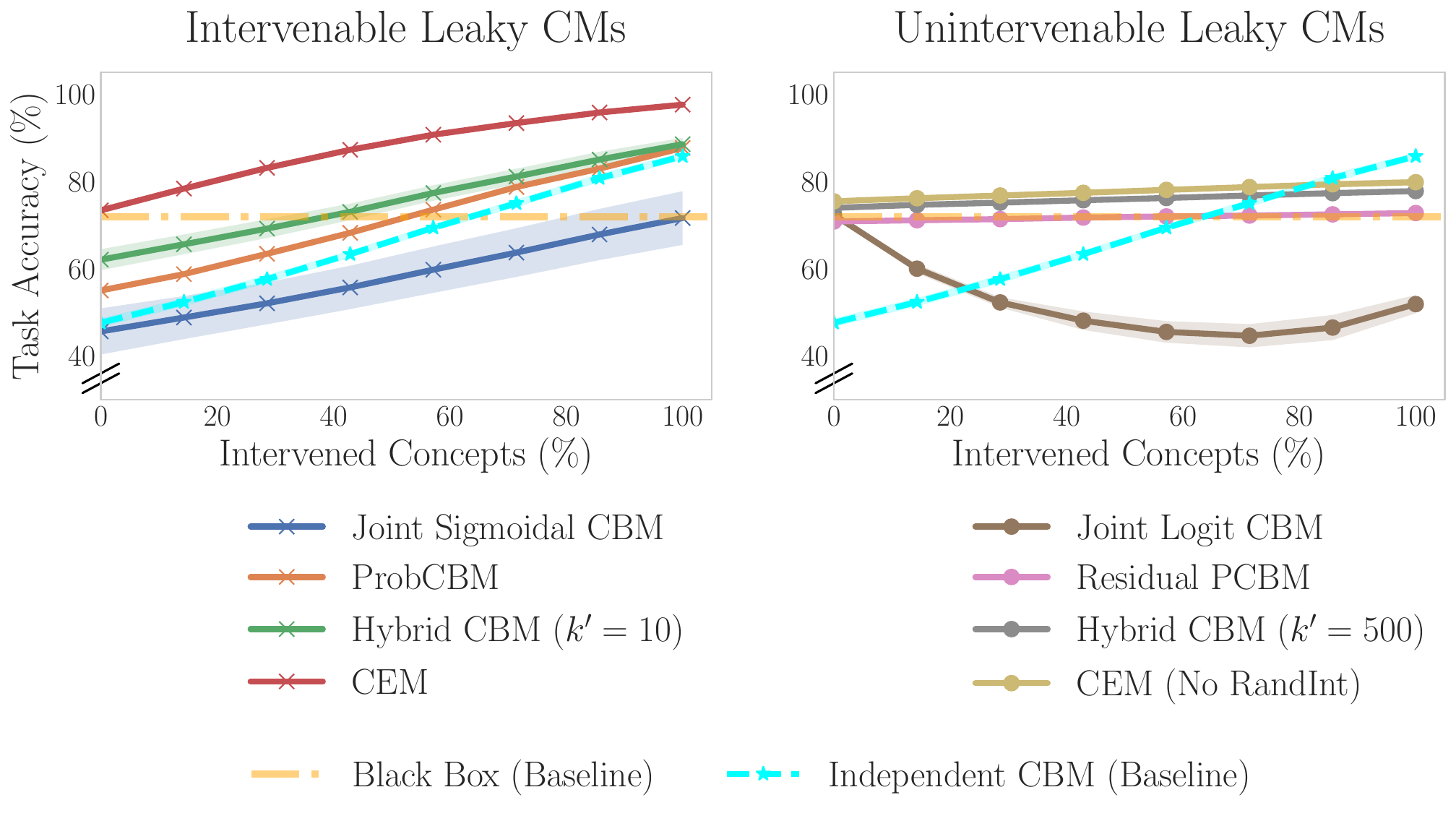}
    \caption{Task accuracy (y-axis) of leaky CMs trained on an \textit{incomplete} version of CUB ($k=22$, details in App.~\ref{app:experimental_details}) as we intervene on concepts at test time (x-axis). \textbf{(left)} Leaky CMs can remain intervenable in incompleteness. \textbf{(right)} Some leaky variants, however, lose their intervenability (curves are non-increasing).}
    \label{fig:intervenability_leaky_in_real_world}
\end{figure}

\myparagraph{Simplicity Bias and Benign Leakage}
CMs that enable leakage can remain highly accurate and intervenable in incomplete settings if (1) interventions do not destroy the leakage (e.g., by not overwriting the activations that may be carrying the leaked information), \textit{and} (2) they implicitly encourage forms of benign leakage. This is true, for example, for embedding-based approaches such as CEMs, ProbCBMs, and MixCEMs, where interventions are operations that swap one dynamic high-dimensional representation for another without destroying leakage. All of these models are randomly intervened on during training, a training-time mechanism called \textit{RandInt} which can be seen as implicitly minimizing the \textit{sufficiency} regularizer $\mathcal{L}_\text{int}$. Similar training pipelines with loss terms that more closely resemble $\mathcal{L}_\text{int}$ can be found in intervention-aware versions of CEMs~\citep{intcem}.

However, as discussed in Sec.~\ref{sec:benign_leakage}, \textit{sufficiency} alone does not guarantee benign leakage. For a label predictor to be intervenable when leakage is present, the variables encoding concept-specific information must be well-localized within $\hat{C}$. Hence, if these leakage-enabling CMs only (implicitly) minimize $\mathcal{L}_\text{int}$, what leads them to also appear to attain \textit{localization}? Moreover, even if concepts are localized, what encourages label predictors to properly attend to changes in these variables and become intervenable? Here we hypothesize that both of these effects can be seen as a consequence of the \textit{simplicity bias} of DNNs: when given an option to learn several competing hypotheses, stochastic gradient descent favors simpler hypotheses over more complex ones~\citep{simplicity_bias, pitfalls_of_simplicity_bias}. Specifically, we argue that localization and intervenability naturally arise when jointly minimizing $\mathcal{L}_\text{int}$ because concept encoders and label predictors that achieve these properties are simpler than equally-performing models that do not.

To see this, consider, without loss of generality,  what happens when we train a CM to minimize the traditional joint CBM loss together with the regularizer $\mathcal{L}_y(f(g(\mathbf{x} \mid C_i := \mathbf{c}_i)), y)$ (for some fixed concept index $i$). Here, whenever we perform an intervention $g(\mathbf{x} \mid C_i := \mathbf{c}_i)$, we are changing in some way the output representation $\mathbf{\hat{c}}_i$ of concept $C_i$ (e.g., by fixing $\hat{\mathbf{c}}_i = c_i$ for sigmoidal CBMs). Therefore, when tasked to minimize the task loss given the ground-truth concept label $\mathbf{c}_i$, a label predictor that extrapolates how $\hat{\mathbf{c}}$ is affected by the intervention $g(\mathbf{x} \mid C_i := \mathbf{c}_i)$ (i.e., how $\hat{\mathbf{c}}$ encodes the true value of $\mathbf{c}_i$) will only require its concept predictor to learn to encode the information $I(Y ; X \mid C_i)$ in the rest of its representations to be able to accurately predict $Y$. Learning this label predictor and concept encoder is arguably simpler than learning a concept encoder that needs to both (a) accurately predict $C_i$ and re-encode its information somewhere new in $\hat{\mathbf{c}}$, and (b) still learn to encode the information $I(Y ; X \mid C_i)$ in $\hat{\mathbf{c}}$. Therefore, when $\mathcal{L}_\text{int}$ is heavily penalized, both the label predictor and concept encoder will gravitate towards learning the simpler hypothesis whose label predictor (1) effectively extrapolates which variables in $\hat{\mathbf{c}}$ are affected by the intervention on concept $C_i$ (i.e., leading to localization), and (2) uses this information to quickly minimize $\mathcal{L}_\text{int}$ (i.e., leading to intervenability).

\myparagraph{The Effects of Proper Conditioning}
We note that indirect evidence for the hypothesis above already exists in the literature: \citet{mixcem} show in their appendix that, when training a leaky CM using a stricter variant of $\mathcal{L}_\text{int}$ and a task loss term, the CM learns to implicitly align each ground-truth concept $C_i$ with the set of variables that are affected when concept $C_i$ is intervened on. This can be done even when \textbf{no explicit concept loss was used} to train the CM. We illustrate the strength of this effect in Figure~\ref{fig:ddn_interevention_curves}. There, we visualize the result of intervening on an otherwise \textit{opaque DNN} that was trained with the regularizer $\mathcal{L}_\text{int}$. To achieve this, during training, we selected a set of $k$ neurons in the penultimate layer of the DNN and intervened on them as if they were normal CBM soft concept representations (which they are not, as we do not introduce any concept alignment losses when training this model, and the neurons do not form a bottleneck). We see that this DNN, trained without \textit{any} concept alignment loss, is intervenable and shows signs of localization as measured by the ROC-AUC between the neuron $h_i$ used to intervene on concept $C_i$ and the ground-truth $\mathbf{c}_i$ (we get a mean concept ROC-AUC of $80.79 \pm 0.11\%$). Hence, even though directly optimizing for \textit{localization} is intractable, by heavily penalizing $\mathcal{L}_\text{int}$ we can implicitly encourage both localization and sufficiency.
\begin{figure}[t!]
    \centering
    \includegraphics[width=0.87\columnwidth]{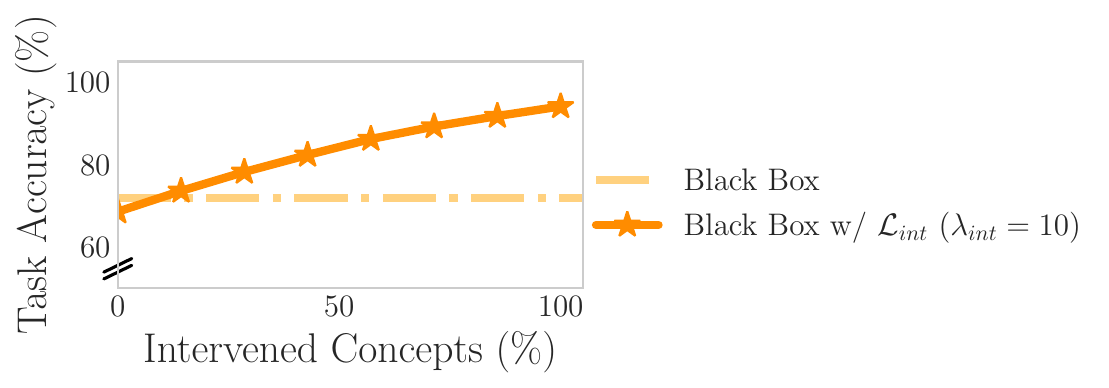}
    \caption{A DNN trained on an \textit{incomplete} CUB task ($k=22$) w/o \textit{any} concept alignment loss becomes intervenable with $\mathcal{L}_\text{int}$.}
    \label{fig:ddn_interevention_curves}
\end{figure}

Moreover, we observe that the localization resulting from explicitly minimizing $\mathcal{L}_\text{int}$ can be exploited to make leakage-enabling CBMs intervenable. As shown in Figure~\ref{fig:adapted_leaky_curves}, by adding a strong weight to $\mathcal{L}_\text{int}$ when training Joint Logit CBMs, large Hybrid CBMs, and CEMs without RandInt, we can make all of these models intervenable, something we saw did not naturally occur in Figure~\ref{fig:intervenability_leaky_in_real_world} (right) (App.~\ref{app:additional_experiments} shows similar results for other CMs). Perhaps the most surprising result is that even large Hybrid CBMs become highly intervenable when minimizing $\mathcal{L}_\text{int}$; all without sacrificing task or concept fidelity (something that \textit{does not} happen when RandInt is introduced in their training, as seen in~\citealt{cem}). Therefore, these results strongly suggest that, when CMs are properly conditioned by penalizing $\mathcal{L}_\text{int}$, \textit{information leakage does not necessarily lead to a loss of intervenability}. In fact, it may lead to intervenable CMs with higher task fidelities.

We note that we observe the same trends as in Figures~\ref{fig:ddn_interevention_curves} and~\ref{fig:adapted_leaky_curves} across several different tasks (see App.~\ref{app:additional_datasets}).

\begin{figure}[h!]
    \centering
    \includegraphics[width=\columnwidth]{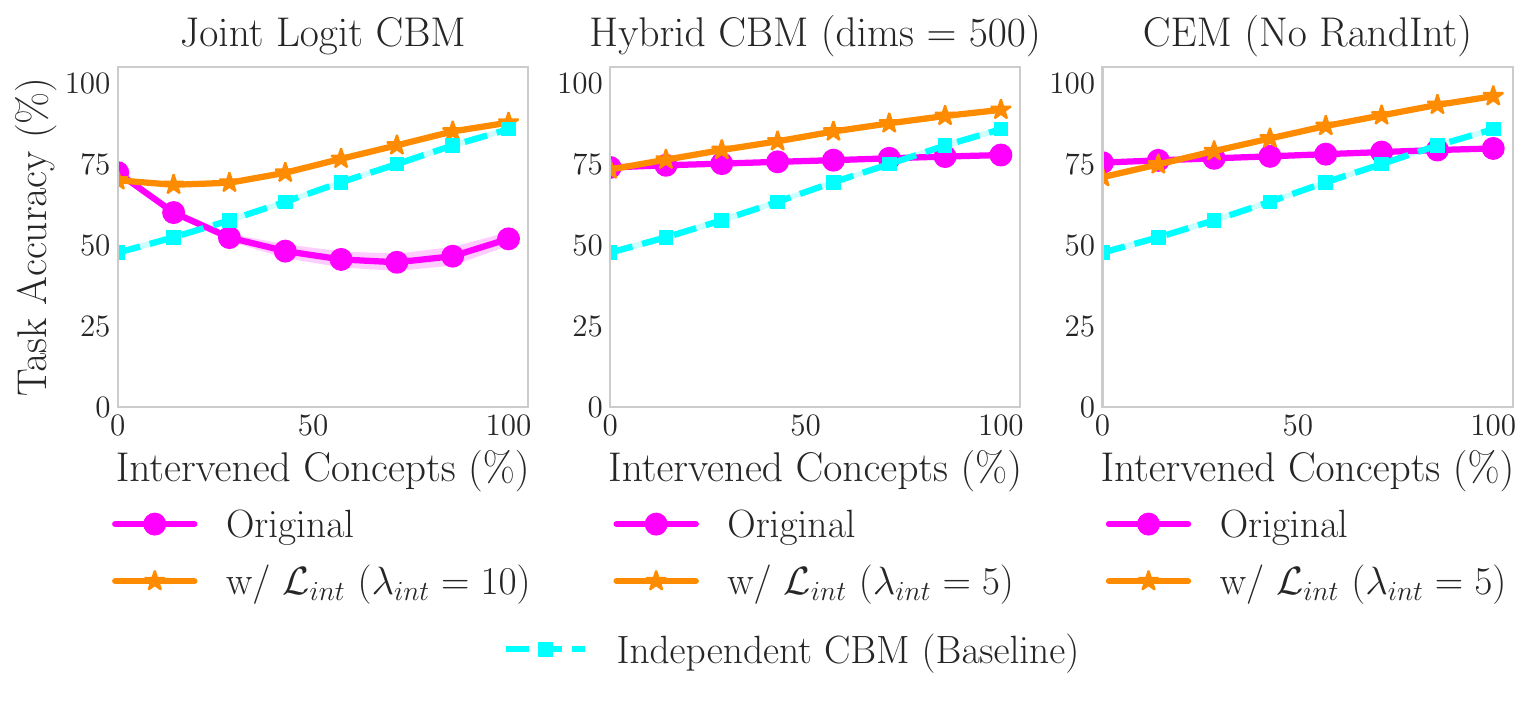}
    \caption{Effect of regularizing previously unintervenable leaky CMs with $\mathcal{L}_\text{int}$ on an \textit{incomplete} version of CUB ($k=22$).
    }
    \label{fig:adapted_leaky_curves}
\end{figure}

\subsection{Revisiting Interpretability Claims}
\label{sec:revisiting_interpretability}

We now examine the interpretability claims against leakage. For this, we use as an example the Hybrid CBM $M_H = (g_H, f_H)$ trained with $\lambda_\text{int} = 5$ on the incomplete CUB task of Figure~\ref{fig:adapted_leaky_curves}. Given the number of unsupervised activations $\psi(\mathbf{x}) \in \mathbb{R}^{k^\prime}$ in $M_H$'s bottleneck $\hat{\mathbf{c}}$ ($k^\prime = 500$, much larger than the concept-aligned activations $k = 22$), this model has an expressive mechanism for leaking information in $\hat{\mathbf{c}}$. As such, we use $M_H$ to assess the validity of interpretability claims against leakage.
We note that we assess these claims using the same  proxy metrics that prior work uses to study interpretability (e.g., concept ROC-AUC, model weights)
only to show that, from the perspective of those common metrics, well-conditioned leaky CMs do not lose properties usually associated with CMs considered ``interpretable''. With this in mind, we summarize our findings below.

\myparagraph{Traditional metrics do not indicate leaky CMs are less interpretable}
Interpretability claims against leakage typically argue that if a concept representation, or the concept bottleneck $\hat{\mathbf{c}}$ more generally, encodes information beyond its corresponding concepts, then the CM is less ``interpretable''. However, our experiments above suggest that, under the same metrics used by previous works to evaluate interpretability, this conclusion is not obvious: the Hybrid CBM $M_H$ above has a very similar concept fidelity to the non-leaky Independent CBM $M_I$ baseline ($M_H$ has a mean concept ROC-AUC of $93.65 \pm 0.22\%$ while $M_I$'s is $93.90 \pm 0.11\%$). Moreover, as seen in Figure~\ref{fig:adapted_leaky_curves}, the Hybrid CBM has (1) a significantly higher task fidelity than $M_I$ ($73.31 \pm 0.26\%$ vs $47.61 \pm 1.01\%$), and (2) a significantly higher area under the intervention curve than $M_I$ (implying higher intervenability). These curves also reveal that $M_H$ likely allows leakage at its bottleneck: its task accuracy when it receives several interventions is higher than the task accuracy achieved by the independent CBM when it is intervened on all training concepts (i.e., the rightmost point of the Independent CBM's intervention curve).

The results above suggest that, from the perspective of the evaluation metrics used to study CMs, a leaky model like $M_H$ is likely to be considered ``better'' than a non-leaky model like $M_I$. This, however, is not aligned with common views of leakage. It is then natural to wonder under what perspective one can argue that a leaky CM is less interpretable than a non-leaky counterpart. For example, the traditional CM evaluation metrics used above may not detect whether a label predictor operating on leaky representations reasons about a task $y$ using concepts the same way an expert would, a common argument against leakage. Therefore, could we somehow evaluate this property in a different manner?

As the label predictors for $M_H$ and $M_I$ are linear models, we can study this question by looking at how a leaky CM's task predictor weighs concepts $C$ to predict $Y$ and compare it to how a model we consider ``interpretable'', such as $M_I$, performs the same task.
Let the label predictor of $M_A$, for $A \in \{H, I\}$, be $f_{A}(\hat{\mathbf{c}}) = \text{Softmax}(W_{A} \hat{\mathbf{c}} + b_{A})$, where $W_{A} \in \mathbb{R}^{L \times \text{dim}(\text{Dom}(f_A))}$. Moreover, let $T_y(M_A)$ be the set of top-$m$ weights in the $y$-th row of $W_A$ (we use $m=5$, but show in App.~\ref{app:topm} that our results hold across $m$). By looking at the label predictors of $M_H$ and $M_I$, we see that, on average, the size of $O = |T_y(M_H) \cap T_y(M_I)|$ is $4.31 \pm 0.86$. In other words, the set of top-5 most important concepts used by $M_I$'s label predictor to predict a given class $y$ is, on average, almost the same as the set of top-5 most important bottleneck \textit{activations} used by $M_H$ when predicting the same class. To place this within a reasonable frame of reference: the overlap of $T_y(M_I)$ for two differently-seeded Independent CBMs is $4.45 \pm 0.75$, not statistically different from $O$. In contrast, the same value when using an ill-conditioned Hybrid CBM with $\lambda_\text{int} = 0$ is $2.01 \pm 0.21$, which is significantly lower than $O$. Given that the set of potential activations $M_H$ has in its bottleneck is much larger than the set of ground-truth concepts ($(k + k^\prime) = 522 > 22 = k$), these results are not only surprising but also a strong indication that $M_H$'s use of concepts is similar to that of $M_I$, a non-leaky model. Therefore, if models such as $M_I$ are considered ``interpretable'', why should we not do the same for well-conditioned leaky models like $M_H$?

Altogether, the simple experiments above suggest that it is unclear under which definition of interpretability leakage leads to \textit{less} interpretable models in incomplete settings. As most interpretability concerns about leakage fall in this ill-specified setup, they are therefore unfalsifiable.

\myparagraph{Partial explanations are useful explanations}
Finally, even if leakage may lead to models that use information not in $C$ to predict $Y$, they are still \textit{useful}: they can still at least \textit{partially} explain their predictions based on a potentially incomplete set of concepts $C$, concepts which experts themselves also use to explain, sometimes also partially, their reasoning. Moreover, as seen in Section~\ref{sec:revisiting_intervenability}, leakage does not preclude intervenability, enabling experts to provide test-time concept-based feedback to leaky but well-conditioned CMs (e.g., CMs that minimize $\mathcal{L}_\text{int}$). In the common setting where $C$ is incomplete, the alternative intervenable model would be to use a perfectly non-leaky CM that, at best, has the same power to partially explain its outputs at the cost of a severe drop in its task fidelity. Given a choice between these two options, we struggle to see an argument for selecting the non-leaky CM over the more accurate leaky CM.

\section{Discussion and Conclusion}
\label{sec:conclusion}

\subsection{A Unifying View of Existing Inductive Biases}
\label{sec:unifying_view}
 
$\mathcal{L}_{\text{int}}$ explicitly encapsulates inductive biases already present, in different forms, across the CM literature: Independent CBMs train $f$ on a fully-intervened bottleneck, making it the limit of $\mathcal{L}_{\text{int}}$ as $\lambda_{\text{int}} \to \infty$; CEMs and ProbCBMs stochastically intervene on concept subsets during training, approximating $\mathcal{L}_{\text{int}}$ in expectation; architectures that predict concepts iteratively~\citep{leakage_free_cbms, chattopadhyay2022interpretable} implicitly penalize models when they make mistakes given all concept labels, as in $\mathcal{L}_{\text{int}}$; finally, intervention-aware losses~\citep{intcem}, KL regularizers~\citep{there_was_never_a_bottleneck}, and robustness losses~\citep{mixcem}, all implicitly encourage sufficiency or localization. $\mathcal{L}_{\text{int}}$ therefore acts as an architecture-agnostic regularizer that recovers several previously proposed mechanisms as special cases.

\subsection{Relation to Other Subfields}

\myparagraph{Sparse Autoencoders and Steering}
Sparse autoencoders (SAEs)~\citep{sae} are commonly used to discover human-interpretable features in an LLM's representations. A central goal of this line of work is \textit{monosemanticity}, i.e., discovering features that respond to a single concept~\citep{monosemanticity}. Our localization condition in Definition~\ref{def:benign_leakage} can be seen as a formalization of monosemanticity: information about a concept should be localized to a known component of the representation.

Under this view, steering on monosemantic SAE features is a form of benign leakage that can produce targeted changes in LLM behaviors~\citep{scaling_monosemanticity, arad2025saes}.
This suggests that the CM and mechanistic interpretability (MechInterp) literatures share the common goal of learning representations in which concepts are localized. The difference, however, lies in the fact that the MechInterp literature treats leakage as an unavoidable property of models and focuses on learning to localize it, rather than eliminating it, as in the CM literature.
We note that similar connections exist with works for concept erasure~\citep{ravfogel2022linear} and representation engineering~\citep{zou2023representation, li2023inference}, where we are interested in modifying components of a model's internal representation such that we can control a specific concept (i.e., localize it) without damaging the model's downstream task utility (i.e., attain sufficiency).

We therefore hope that, by recognizing that leakage cannot be avoided and that one must instead focus on controlling it, future CM work may be more open to exploring methodologies in which leakage is expected and required (as in works on mechanistic interpretability). Doing so would enable CMs to be applied to practical settings where traditional steering and SAEs are currently used.

\myparagraph{Concept-based Foundation Models}
Label-free concept discovery pipelines~\citep{label_free_cbm, discover_then_name_concept_bottlenecks, labo} that build CMs on top of foundation models operate in precisely the regime where our argument for benign leakage applies most strongly: concept sets extracted from foundation models are inherently incomplete, and their representations encode far more information than that captured by the discovered concept set. Embracing benign leakage rather than attempting to eliminate it is, therefore, not only pragmatically necessary in this setting but also a plausible path to scaling CMs. Hence, we believe it is important to adjust the current narrative against leakage if the field is to enable the construction of powerful, concept-based foundation models. Otherwise, continuing to treat leakage as something that must be removed at all costs risks making this class of models unappealing to the community before their potential has been fully explored.

\subsection{Conclusion}
This paper argued that the prevailing view of information leakage in CMs as a purely negative phenomenon is, in many cases, ill-posed and counterproductive. Specifically, we made our case by showing, through a set of simple experiments and evidence from prior work, that previous claims that leakage leads to CMs that (1) are not intervenable, or (2) sacrifice properties usually associated with interpretable models, are, at best, inconclusive. In contrast, we argued that there are good reasons to want some leakage: in realistic concept-incomplete settings, eradicating leakage will only preclude CMs from remaining accurate, thereby making these models impractical. Hence, we showed that minimizing the CM's task loss when all concepts are intervened can lead to leakage that is properly \textit{localized} and \textit{sufficient} to overcome incompleteness. Such a simple regularizer, applicable to all CMs, enables even models previously thought to be unintervenable to remain accurate and intervenable, while retaining common proxy properties usually associated with the interpretability of CMs, such as high concept fidelity and meaningful concept-to-task weight structure.

\subsection{Call to Action}
Taken together, we hope that the arguments and evidence put forth in this work naturally form a call for the CM community to (1) see past the goal of \textit{entirely} eliminating leakage and instead explore ways to better \textit{control} it,
(2) evaluate new CMs on incomplete settings, reporting intervenability under such conditions and providing evidence for how the CM performs as the degree of incompleteness varies, (3) avoid broad claims that leakage or any other property of interpretable models \textit{necessarily} destroys interpretability without specifying falsifiable criteria, (4) separate CM-specific questions and phenomena (e.g., leakage) from well-studied, yet more general failure modes of statistical learning (e.g., shortcut learning), and (5) develop principled, architecture-agnostic ways to measure benign leakage.

\section*{Acknowledgments}

We would like to thank Pietro Barbiero, Oscar Hill, Naveen Raman, and Andrei Margeloiu for their helpful feedback and discussions on earlier iterations of this manuscript.

\bibliography{example_paper}
\bibliographystyle{icml2026}

\newpage
\appendix
\onecolumn


\section{Proof of Sufficiency Optimization}
\label{app:proof}

Below we formalize the claim made in Section~\ref{sec:benign_leakage} that, under a reasonable set of assumptions, minimizing the task loss when all concepts are intervened is equivalent to achieving \emph{sufficiency} in the learned concept representations $\hat{C}$. For this, we first notice that, due to the data processing inequality, it must be the case that, for localized concept representations $\hat{C} = (\bar{C}, R)$, we must have $I(Y; R \mid C) \leq I(Y; X \mid C)$, since $R$ is a component of $\hat{C}$, which is in turn a function of $X$. Therefore, to achieve sufficiency, which we defined as $I(Y; R \mid C) \approx I(Y; X \mid C)$, what we want is to maximize $I(Y; R \mid C)$ so that it approaches its upper bound $I(Y; X \mid C)$. Hence, under this perspective, the following theorem says that this maximization is possible by minimizing the task loss when we intervene on all concepts of the bottleneck:

\begin{theorem}[Sufficiency Optimization]
    \label{thm:sufficiency_optimisation_intervened}
    Let $M = \big(g(\cdot \, ; \theta_g) , f(\cdot \, ; \theta_f)\big)$ be a CBM and $(X, C, Y) \sim \mathcal P$ be the data-generating distribution.
    Let $Z := g(X \mid C := C \, ; \theta_g)$ denote the random variable corresponding to the \emph{fully intervened} bottleneck.
    Assume concept interventions are \textit{localized} so that $Z \equiv (C, R)$ for some residual random variable $R$.
    If $\mathcal{L}_y$ is the cross-entropy loss and $f$ is well-specified for $\mathbb{P}(Y \mid Z)$, then:
    \begin{equation}
        \label{eq:theorem-equivalency}
        \argmin_{\theta_g, \theta_f} \underbrace{\mathbb{E}_{(\mathbf{x}, \mathbf{c}, y) \sim \mathcal{P}}\Bigg[ \mathcal{L}_y \Bigg(f\bigg( g\big( \mathbf{x} \mid C := \mathbf{c} \big) \bigg), y \Bigg) \Bigg]}_\text{All-concepts-intervened task loss} \equiv \argmax_{\theta_g} \underbrace{I\big(Y; R \mid C \big)}_\text{Sufficiency}
    \end{equation}
\end{theorem}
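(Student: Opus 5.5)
The plan is to use the standard decomposition of expected cross-entropy into a conditional entropy plus a KL term, and then split that conditional entropy with the chain rule. The decomposition is as follows. For fixed $\theta_g$, write the all-concepts-intervened loss as
\begin{equation*}
\mathbb{E}\big[-\log \mathbb{P}_f(Y \mid Z)\big] = H(Y \mid Z) + \mathbb{E}_{Z}\Big[\mathrm{KL}\big(\mathbb{P}(Y \mid Z)\,\|\,\mathbb{P}_f(Y \mid Z)\big)\Big].
\end{equation*}
The KL term is nonnegative. Because $f$ is assumed well-specified for $\mathbb{P}(Y \mid Z)$, some $\theta_f$ drives it to zero. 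Hence, for every $\theta_g$, we have $\min_{\theta_f}\mathcal{L}_\text{int}(\theta_g,\theta_f) = H(Y \mid Z)$.

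The joint minimization then reduces to $\argmin_{\theta_g} H(Y\mid Z)$. The $\theta_f$ component of the joint argmin is determined as the well-specified conditional for the optimal $\theta_g$, so the equivalence in (\ref{eq:theorem-equivalency}) is read as equality of the $\theta_g$-projections.

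Next, use the localization assumption $Z \equiv (C,R)$. Since $Z$ and $(C,R)$ are in bijection, $H(Y \mid Z) = H(Y \mid C, R)$. The chain rule for mutual information gives
\begin{equation*}
H(Y \mid C, R) = H(Y \mid C) - I(Y; R \mid C).
\end{equation*}
The key observation is that $H(Y \mid C)$ depends only on the data-generating distribution $\mathcal{P}$ and not on $\theta_g$. This holds because the intervened bottleneck replaces the concept part with the ground truth $C$ itself, so the concept part is not a learned prediction. Minimizing $H(Y\mid Z)$ over $\theta_g$ is therefore exactly maximizing $I(Y;R\mid C)$, which yields (\ref{eq:theorem-equivalency}).

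Finally, link this back to sufficiency using the data processing remark preceding the theorem. $R$ is a function of $(X, C)$ under the intervention (typically of $X$ alone). Therefore $I(Y;R\mid C)\le I(Y;X\mid C)$ whenever $R$ depends only on $X$ given $C$, so the maximizer pushes $I(Y;R\mid C)$ toward its upper bound. That is the sufficiency condition (i) of Definition~\ref{def:benign_leakage}.

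The main obstacle is making the ``well-specified'' step rigorous. It must hold uniformly over $\theta_g$: for every encoder in the family, the label-predictor family has to contain the true conditional $\mathbb{P}(Y \mid Z)$. Otherwise the inner minimum is only an upper bound $H(Y\mid Z)+\text{(approximation gap)}$, and the argmins need not coincide. I would state this as the precise meaning of the assumption, namely $\forall\theta_g\,\exists\theta_f$ with $\mathbb{P}_{f}(\cdot\mid z)=\mathbb{P}(Y\mid Z=z)$ almost surely. I would also make explicit that the identification $Z\equiv(C,R)$ is an invertible (measurable) reparametrization, so that conditional entropies are preserved. If $R$ is continuous, I would phrase the entropy terms via the KL/mutual-information form to avoid differential-entropy subtleties. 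This is harmless here because $Y$ is discrete, so $H(Y\mid\cdot)$ is well defined.
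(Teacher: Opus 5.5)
Your proposal is correct and follows essentially the same route as the paper. Both arguments decompose the expected cross-entropy into $H(Y\mid Z)$ plus an expected KL term, use well-specification to remove the KL term for each fixed $\theta_g$, apply localization to get $H(Y\mid Z)=H(Y\mid C,R)$, and use the chain rule $H(Y\mid C,R)=H(Y\mid C)-I(Y;R\mid C)$ with $H(Y\mid C)$ independent of $\theta_g$. Your extra remarks only make precise what the paper uses implicitly or states just before the theorem: well-specification must hold uniformly over $\theta_g$, the equivalence is read as equality of $\theta_g$-projections, and the data-processing bound $I(Y;R\mid C)\le I(Y;X\mid C)$ holds.
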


\begin{proof}
    Fix $\theta_g$ and let $q_{\theta_f}(y\mid z)$ denote the conditional distribution induced by $f(\cdot;\theta_f)$.
    By the standard cross-entropy decomposition, we must have that the LHS of Equation~\ref{eq:theorem-equivalency} becomes:
    \[
        \mathbb{E} \Big[ - \log q_{\theta_f}(Y \mid Z) \Big] = H(Y \mid Z) + \mathbb{E}\Big[D_\text{KL}\Big(\mathbb{P}(Y \mid Z) \,\|\, q_{\theta_f}(Y \mid Z) \Big)\Big]
    \]
    Under well-specification, there exists a set of parameters $\theta_f^\star$ such that $q_{\theta_f^\star}(Y \mid Z) = \mathbb{P}(Y \mid Z)$, making the KL divergence term in the expression above zero.
    Hence, for any fixed $\theta_g$ we must have that $\min_{\theta_f}\ \mathbb{E}\big[\mathcal{L}_y(f(Z;\theta_f),Y)\big] = H(Y\mid Z)$.

    This implies that the LHS of Equation~\ref{eq:theorem-equivalency} can be equivalently written as follows:
    \begin{align*}
         \argmin_{\theta_g, \theta_f} \mathbb{E}\big[ \mathcal{L}_y\big( f(Z ; \theta_f), Y \big) \big] &\equiv \argmin_{\theta_g}\ H(Y \mid Z) \\
         &= \argmin_{\theta_g} H(Y \mid C, R) && (\text{By our localization assumption})\\
         &= \argmin_{\theta_g} \Big( H(Y \mid C) - I(Y ; R \mid C) \Big) \\
         &\equiv \argmin_{\theta_g} - I(Y ; R \mid C) && \big(\text{As } H(Y | C) \text{ does not depend on } \theta_g \big)\\
         &\equiv \argmax_{\theta_g}  I(Y ; R \mid C)
    \end{align*}
    This directly yields the equivalency we wished to show in Equation~\ref{eq:theorem-equivalency}.
\end{proof}


\section{Experimental Details}
\label{app:experimental_details}

In Sections~\ref{sec:the_case_for_leakage} and \ref{sec:revisiting_arguments_against_leakage}, we made use of some simple experiments to build our case. In this appendix, we discuss details for recreating these results.

\subsection{Datasets and Tasks}

\paragraph{General Setup}
In all of our illustrative experiments in Sections~\ref{sec:the_case_for_leakage} and \ref{sec:revisiting_arguments_against_leakage}, we use the CUB-200 dataset~\citep{cub}, preprocessed by \citet{cbm}. Each sample  $(\mathbf{x}^{(j)}, \mathbf{c}^{(j)}, y^{(j)})$ in this original dataset corresponds to a normalized RGB image $\mathbf{x}^{(j)} \in [0, 1]^{3 \times 299 \times 299}$ of a bird that is annotated with one of $L=200$ bird species $y^{(j)}$. Each sample comes with a set of $112$ binary concept annotations $\mathbf{c}^{(j)} \in \{0, 1\}^{112}$ representing visual attributes of the bird (e.g., \concept{black\_wing}, \concept{medium\_sized}, etc.). 
By construction, the concept set in this dataset provides a complete description of the downstream label $y$ (as each task label in the CUB formulation by \citet{cbm} is assigned its own ``concept profile''). 
These concepts are grouped into 28 mutually exclusive concept groups (e.g., \concept{wing\_color}, \concept{size}). Therefore, when intervening on models trained on this dataset, we apply the intervention to all concepts within the same group simultaneously. Finally, we use the train-validation-test splits from \citep{cbm} and randomly flip and crop images during training.

\paragraph{Used Tasks}

To create an incomplete training set, following \citet{intcem}, we subsample the training concept annotations of the CUB dataset described above to construct an incomplete task we call \textit{CUB-Incomplete}. Specifically, we subsample, at random, 25\% of the 28 original concept annotation groups, yielding an \textit{incomplete version of CUB} whose concept vectors $\mathbf{c}$ are formed by $7$ concept groups that have, together, a total of $k = 22$ concepts. 

Unless otherwise stated or clear from the context (e.g., the results of Figure~\ref{fig:cbm_tradeoff_task}), all experiments are conducted on CUB-Incomplete. The only exceptions are the experiments in App.~\ref{app:completeness_experiments}, which use the concept-complete version of CUB to explore how incompleteness affects the intervenability of leakage-enabling models, and those in App.~\ref{app:additional_datasets}, where we show that our observations on CUB generalize to other tasks. 

\subsection{Baselines}

In our illustrative experiments, we train and evaluate the following baselines:
\begin{enumerate}
    \item \textbf{Concept Bottleneck Models}~\citep{cbm}: We train CBMs with sigmoidal (i.e., $\hat{\mathbf{c}}_i \in [0, 1]$ and $s(\hat{\mathbf{c}}) = \hat{\mathbf{c}}$) and logit (i.e., $\hat{\mathbf{c}}_i \in \mathbb{R}$ and $s(\hat{\mathbf{c}}) = \sigma(\hat{\mathbf{c}})$) bottlenecks. These models are trained \textit{independently}, \textit{sequentially}, and \textit{jointly} (minimizing the combination of $\mathcal{L}_y(\cdot) + \lambda_c \mathcal{L}_c(\cdot)$). In our experiments, we use the independently trained sigmoidal CBM (\textit{Independent CBM}) as a minimal-leakage baseline, since its label predictor is trained on ground-truth concepts.

    \item \textbf{Hybrid CBMs}~\citep{promises_and_pitfalls}: We include Hybrid CBMs as an example of a leaky CM that introduces a bypass channel in its bottleneck. In practice, we train these models by extending a sigmoidal CBM with $k^\prime$ additional unconstrained activations $\psi(\mathbf{x})$ which are added to the bottleneck. This model is trained by minimizing a joint loss that weights the concept loss term using a hyperparameter $\lambda_c$.

    \item \textbf{Black Box (Baseline)}: We implemented a Black Box DNN baseline using a Hybrid CBM with $k^\prime = 500$ additional neurons in its bottleneck whose concept loss weight during training is set to $0$ (i.e., $\lambda_c = 0$). This ensures the Black Box model is given the same capacity as competing models it is compared against (e.g., equivalent Hybrid CBMs). 
    \item \textbf{Concept Embedding Models (CEMs)}~\citep{cem}: We include, as an example of a highly leaky baseline, a CEM that represents concepts as a mixture of two dynamic embeddings with dimensionality $m=16$. Unless explicitly stated otherwise, all CEMs are trained using RandInt, as in their original work. That is, during training, we intervene on a concept with probability $p_\text{int} = 0.25$.
    \item \textbf{Probabilistic CBMs (ProbCBMs)}~\citep{probcbm}: As a second embedding-based baseline, we use ProbCBMs, which represent concepts with probabilistic embeddings of size $m=16$. As suggested by the authors, we train ProbCBMs with RandInt, intervening on a concept during training with probability $p_\text{int} = 0.5$.
    \item \textbf{Post-hoc CBMs (PCBMs)}~\citep{posthoc_cbm}: Finally, we include as baselines PCBMs trained from the Black Box model baseline. We train both a standard Post-hoc CBM, whose leakage is relatively small, and a Residual Post-hoc CBM, which introduces an explicit residual channel that enables higher task accuracy in incompleteness at the cost of higher leakage.
    
\end{enumerate}

\paragraph{Model Selection and Implementation}
All models are implemented in PyTorch~\citep{pytorch} and trained using the same backbone architectures, optimizers, and training schedules as in \citet{mixcem}. Specifically, we made use of and extended the publicly available code\footnote{See \href{https://github.com/mateoespinosa/cem}{https://github.com/mateoespinosa/cem} for configuration files to recreate our results.} from \citet{mixcem} to access implementations for all the baselines used in our experiments. To better communicate our position, we avoid expensive fine-tuning of baselines by reusing the exact hyperparameters and model selection reported in \citep{mixcem} for each baseline. Hence, unless specified otherwise, all baselines were built from an ImageNet-pretrained ResNet-18~\citep{resnet} backbone for the concept encoder $g(\cdot)$, and a linear layer for the label predictor $f(\cdot)$. For further details on the hyperparameters used for each baseline in our CUB experiments, we refer the reader to Appendix D of \citep{mixcem}.

\paragraph{Sufficiency Regularizer}
When introducing the sufficiency regularizer $\mathcal{L}_{\text{int}}$, we add it to the training objective with weight $\lambda_{\text{int}} \in \{0,1,5,10\}$ while keeping all other hyperparameters and training configurations of the underlying model constant. The value of $\lambda_{\text{int}}$ is then selected based on the area under the intervention curve on the validation set. For the incomplete CUB experiments, this yields $\lambda_{\text{int}} = 10$ for most models, except for Hybrid CBMs and CEMs, where it selects $\lambda_{\text{int}} = 5$.

When we train the Black Box baseline model with the regularizer $\mathcal{L}_{\text{int}}$, we select $k$ neurons in the output of the penultimate layer and intervene on those neurons as we would for a traditional sigmoidal CBM (i.e., setting the output of the neuron to $\hat{\mathbf{c}}_i := \mathbf{c}_i$). Furthermore, when using this regularizer for models whose concept representations are unbounded scalars (e.g., logit CBMs~\citep{cbm}), we intervene on concepts by setting their respective representations to a high logit value when $\mathbf{c}_i = 1$ (e.g., we use $+5$) and to a low logit value when $\mathbf{c}_i = 0$ (e.g., we use $-5$).


\section{Completeness Experiments}
\label{app:completeness_experiments}

To disentangle the effects of information leakage from those of concept incompleteness, we replicate the intervention experiments of Figure~\ref{fig:intervenability_leaky_in_real_world} on a complete version of CUB (using the same setup as \citet{cbm} where $k=112$).

Our results, summarized in Figure~\ref{fig:app_complete_intervenability_leaky_in_real_world}, suggest that, when the concept set is complete, all leakage-enabling models remain intervenable, including those that were seen to lose intervenability when the concept set was incomplete (Figure~\ref{fig:intervenability_leaky_in_real_world}). Nevertheless, we observe that even in this instance, the degree of intervenability of the leaky CMs in the right panel of Figure~\ref{fig:app_complete_intervenability_leaky_in_real_world} is not the same: for example, although CEMs without RandInt do increase their task accuracies as they are intervened on, the effect is minimal compared to that of other models. This implies that it is likely that incompleteness can lead to several leaky models to become unintervenable, but its effects on leaky CMs can vary.

\begin{figure}[h!]
    \centering
    \includegraphics[width=0.7\columnwidth]{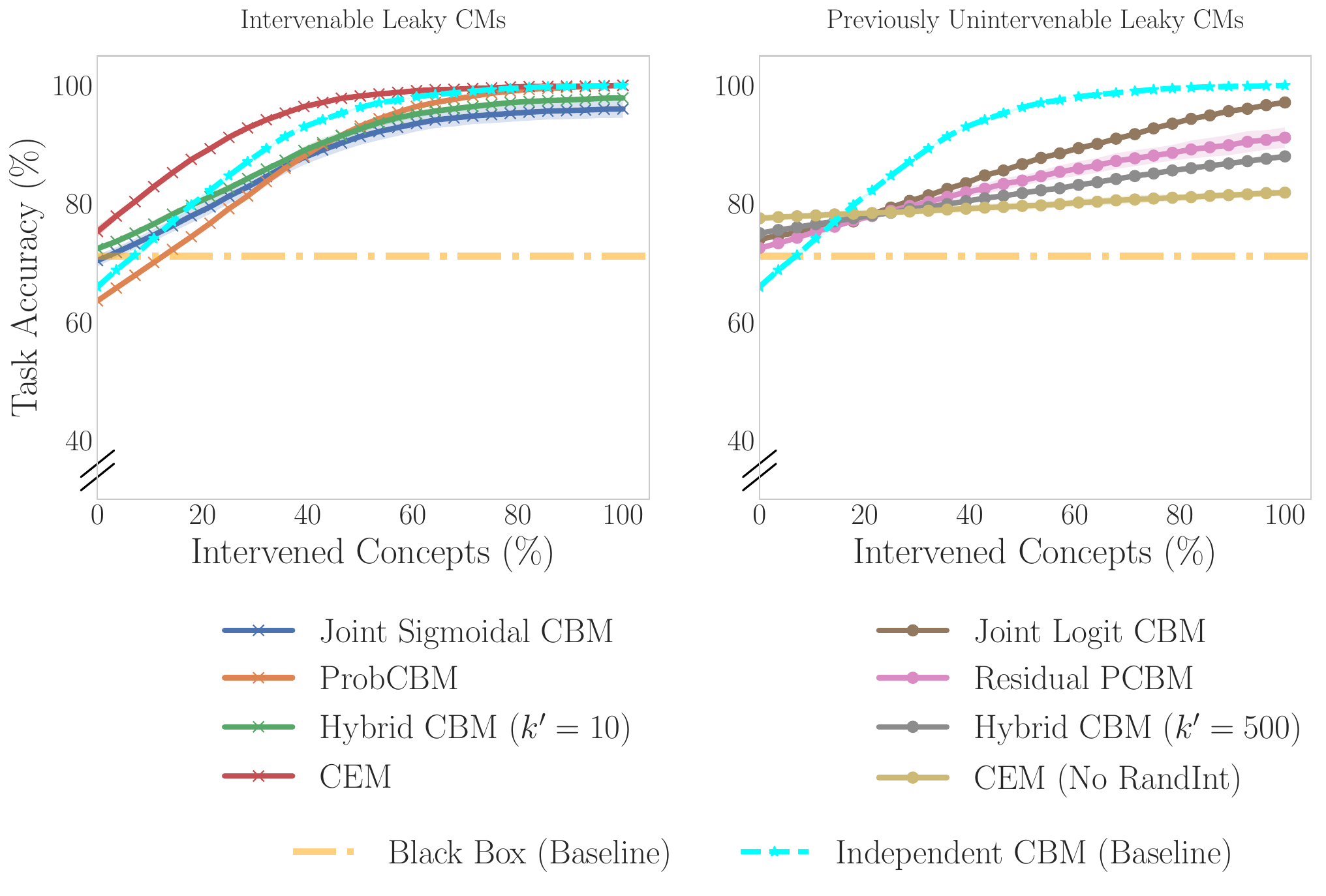}
    \caption{Intervention curves for leakage-enabling CMs on a \textit{complete} version of CUB ($k=112$, as in \citealt{cbm}). \textbf{(left)} Leakage-enabling CMs that were intervenable in the incomplete version of CUB remain highly intervenable here. \textbf{(right)} Leakage-enabling CMs that were previously unintervenable in the incomplete version of CUB (Figure~\ref{fig:intervenability_leaky_in_real_world}) can become intervenable when the concept set is complete, although to a lesser degree than the non-leaky baseline (Independent CBMs) variants.}
    \label{fig:app_complete_intervenability_leaky_in_real_world}
\end{figure}


\section{Extended Experiments With Sufficiency Regularizer}
\label{app:additional_experiments}

We further evaluate the effect of the sufficiency regularizer $\mathcal{L}_{\text{int}}$ on models that were previously unintervenable. Specifically, we apply $\mathcal{L}_{\text{int}}$ to Joint Sigmoidal CBMs, Joint Logit CBMs, PCBMs, Residual PCBMs, large Hybrid CBMs ($k'=500$), and CEMs trained without RandInt on the incomplete CUB task ($k=22$).

Figure~\ref{fig:complete_intervenability_regulariser} shows that introducing $\mathcal{L}_{\text{int}}$ consistently improves intervenability across all models, often without degrading task or concept fidelity. These results support the claim that proper conditioning via sufficiency optimization can recover intervenability even in highly leaky architectures.

\begin{figure}[h!]
    \centering
    \includegraphics[width=\columnwidth]{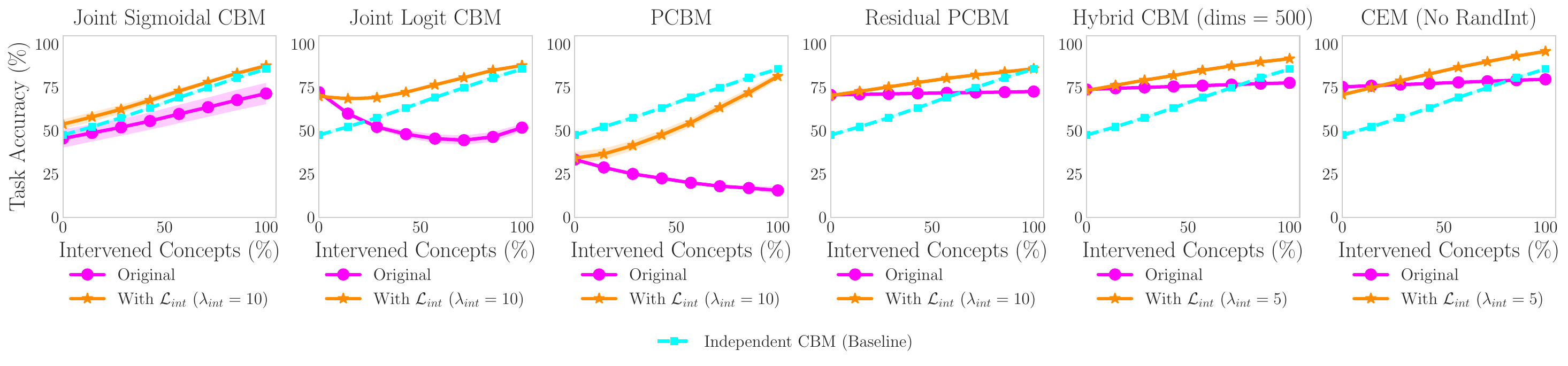}
    \caption{Extended results showing the effect of applying the sufficiency regularizer $\mathcal{L}_{\text{int}}$ to previously unintervenable leakage-enabling models on an \textit{incomplete} version of CUB with $k=22$ concepts. }
    \label{fig:complete_intervenability_regulariser}
\end{figure}


\section{Verifying Trends on Additional Datasets}
\label{app:additional_datasets}

To further verify the generality of the claims in Section~\ref{sec:revisiting_arguments_against_leakage}, we replicate our key intervention experiments on four additional datasets that span a range of incompleteness and noise regimes:

\begin{itemize}
    \item \textbf{Full CUB} (complete but noisy concept set). We use the full class-standardized CUB~\citep{cub} attribute set with $k = 312$ concepts. Several of these annotations are noisy, providing a useful stress test of our claims.
    \item \textbf{CIFAR10} (label-free concepts). Following \citet{label_free_cbm}, we extract $k = 143$ concepts for CIFAR10 via a label-free pipeline, processed and generated as in \citep{mixcem}. Because these labels are obtained through foundation models, they are incomplete and noisy.
    \item \textbf{Incomplete Version of AwA2} (highly incomplete concept set). As in~\citep{mixcem}, we construct a highly incomplete version of Animals with Attributes 2 (AwA2)~\citep{awa2} by randomly selecting $9$ out of the $85$ available concept annotations.
    \item \textbf{SUN Attributes} (many-way, few-shot dataset). Inspired by prior works~\citep{coarse_to_fine_cbms, vemuri2026logiccbms}, we explore the SUN Attributes~\citep{patterson2012sun} dataset ($717$ classes), using the $k=102$ class-level attribute annotations as concepts. This is a particularly difficult task for CMs given that its class-level concept annotations are noisy and contain inherently ambiguous concepts (e.g., ``\texttt{mostly horizontal components}'', ``\texttt{moist}''), and there are only $\approx\!20$ samples per class. These properties make SUN particularly prone to memorization.

\end{itemize}

For all of these tasks, we use the same models, dataset configurations, and hyperparameters as in~\citep{mixcem}. Therefore, here we focus on fine-tuning $\lambda_\text{int}$ for all baselines using the grid search described in App.~\ref{app:experimental_details}.

The only exception for how we select and train models is SUN. There, we use the frozen representations of a ResNet-101 trained on SUN as our backbone, and attach a learnable linear head to map them to the bottleneck. When training all models here, we (1) use a concept loss weight of $\lambda_\text{concept}=10$ for all jointly trained CMs, (2) use a batch size of $256$, (3) train all models using an Adam optimizer with learning rate $10^{-3}$ and early stopping with patience $20$ epochs (for a maximum of 150 epochs), (4) use $k^\prime = 100$ for the extra capacity in the bottleneck of the DNN and Hybrid CBM baselines (given the high number of output classes), and (5) select all other hyperparameters for all baselines by running the same model selection pipeline as in \citep{mixcem} for CUB tasks. To prevent residual channels in leaky CMs from memorizing training samples in SUN and to push the label predictor to rely on the concept-aligned components, we apply a high dropout rate ($75\%$) to residual neurons during training, mirroring the residual-dropout strategy of~\citet{mixcem}. We then report the effects of applying $\mathcal{L}_{\text{int}}$ to Black Box models and Hybrid CBMs, both with and without this dropout.

\myparagraph{Findings}
As in Section~\ref{sec:revisiting_arguments_against_leakage}, we observe that, across all additional tasks (Fig.~\ref{fig:additional_datasets}), adding $\mathcal{L}_{\text{int}}$ makes leaky CMs that preserve leakage after being intervened on (e.g., Hybrid CBMs, CEMs) highly intervenable. In the most extreme incompleteness regime (AwA2), even models whose interventions destroy leakage (e.g., joint sigmoidal CBMs) surpass independent CBMs in unintervened task accuracy when trained with $\mathcal{L}_{\text{int}}$, although their initial accuracy drops as interventions are applied.

\begin{figure}
     \centering
     \begin{subfigure}[b]{0.89\textwidth}
         \centering
         \includegraphics[width=\textwidth]{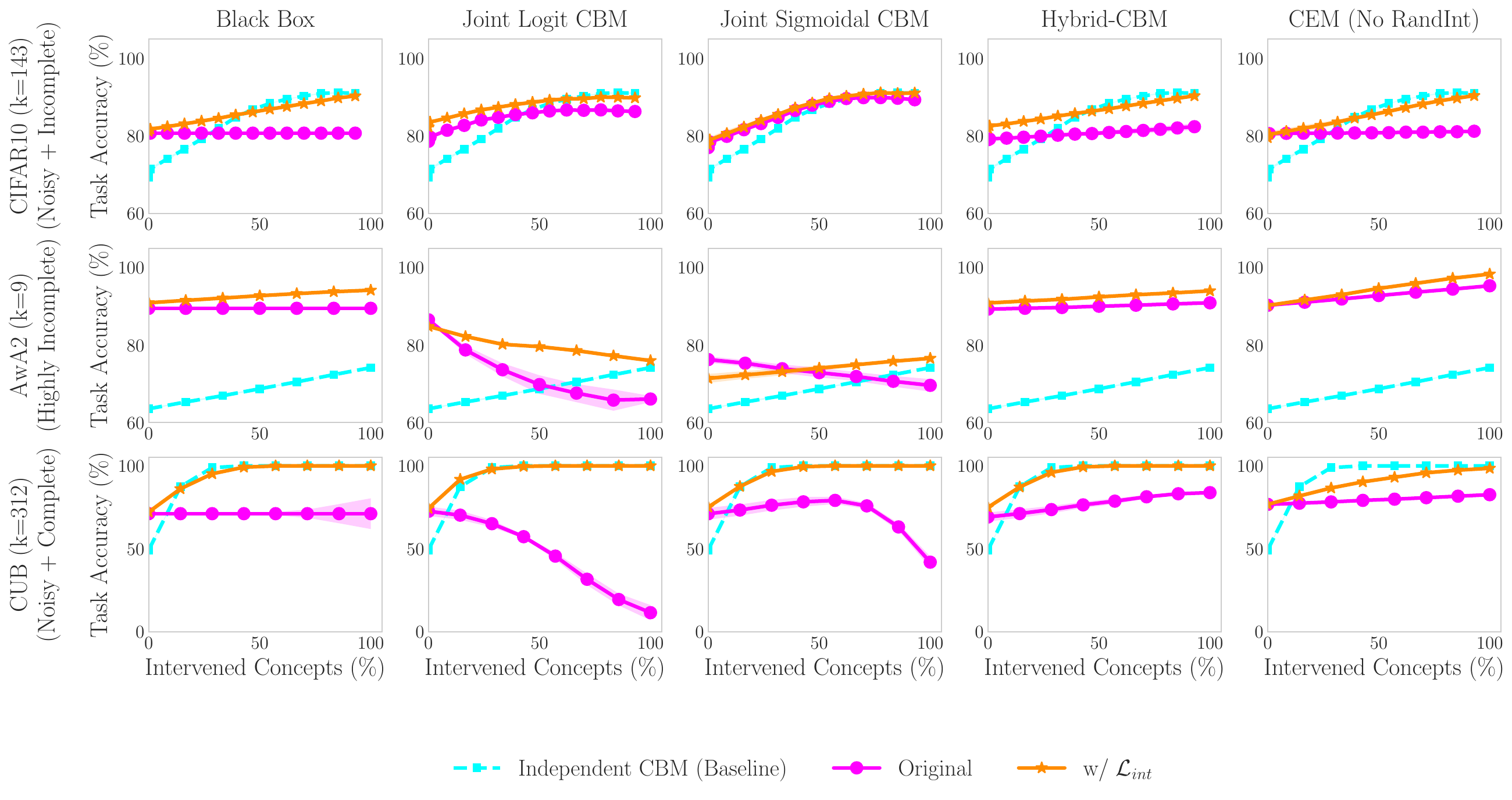}
         \subcaption{Intervention curves on CIFAR10 (label-free concepts), an incomplete AwA2, and the full $312$-concept CUB task.}
        \label{fig:additional_datasets_large}
     \end{subfigure}
     \begin{subfigure}[b]{0.89\textwidth}
        \centering
        \includegraphics[width=\textwidth]{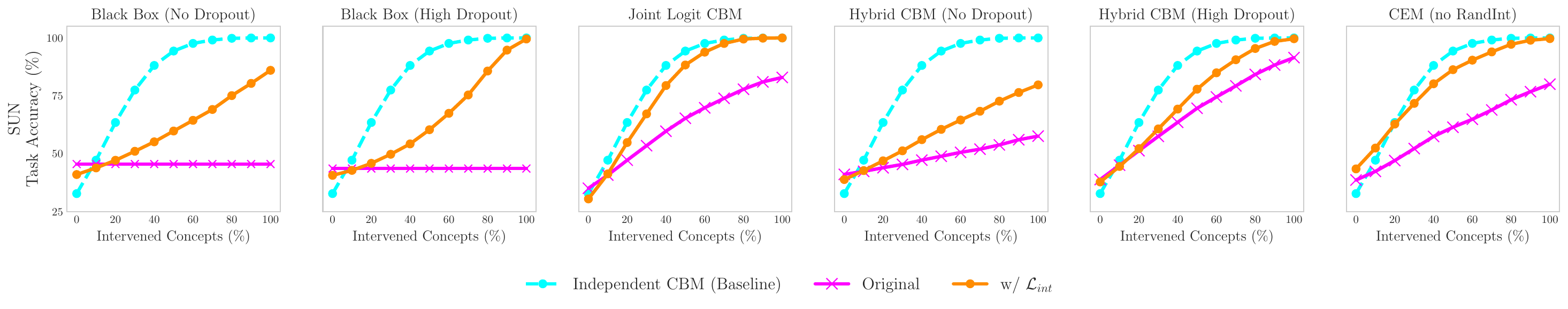}
        \subcaption{Intervention curves on SUN Attributes ($102$ concepts,
        $717$ classes). When applicable, we show curves for models trained with and without dropout on their residual components.}
        \label{fig:sun}
     \end{subfigure}
    \caption{Intervention curves on additional tasks, verifying the results discussed in Section~\ref{sec:revisiting_arguments_against_leakage} for the incomplete version of CUB.}
    \label{fig:additional_datasets}
\end{figure}

\section{Top-$m$ Weight Overlap}
\label{app:topm}

In Section~\ref{sec:revisiting_interpretability} we compare the overlap in top-$m$ bottleneck-to-class weights between a well-conditioned Hybrid CBM $M_H$ and a (non-leaky) Independent CBM $M_I$. In that discussion, we specifically use $m=5$ when presenting our arguments. Therefore, a natural concern is that the observed overlap is an artifact of the specific choice $m = 5$.

To rule this out, we sweep $m$ over a wide range and recompute the overlap $|\mathcal{T}_y(M_H) \cap \mathcal{T}_y(M_I)|$. As a reference we also report (i) the overlap between two Independent CBMs trained with different random seeds, which upper-bounds what we should expect from any two ``interpretable-like'' label predictors, and (ii) the overlap that an ill-conditioned Hybrid CBM ($\lambda_{\text{int}} = 0$) attains against $M_I$.

Our results, summarized in Figure~\ref{fig:topm_overlap}, show that, as we vary $m$, the overlap in weight ranks between $M_H$ and $M_I$, shown by the orange line, and the overlap between two differently-seeded Independent CBMs, shown by the green line, remains close across the full
range of $m$ for which the test remains discriminative (i.e., up until a random set of weights is indistinguishable from the overlap between two Independent CBMs with different seeds, shown in the dashed gray line). In contrast, the gap between the overlap of differently-seeded Independent CBMs and that of the ill-conditioned Hybrid CBM with $M_I$, shown by the blue line, is significantly different for most values of $m$. This suggests that the leaky but well-conditioned $M_H$ model allocates importance across concepts in a way that is statistically indistinguishable from the non-leaky $M_I$ model.

\begin{figure}[h]
  \centering
  \includegraphics[width=0.9\columnwidth]{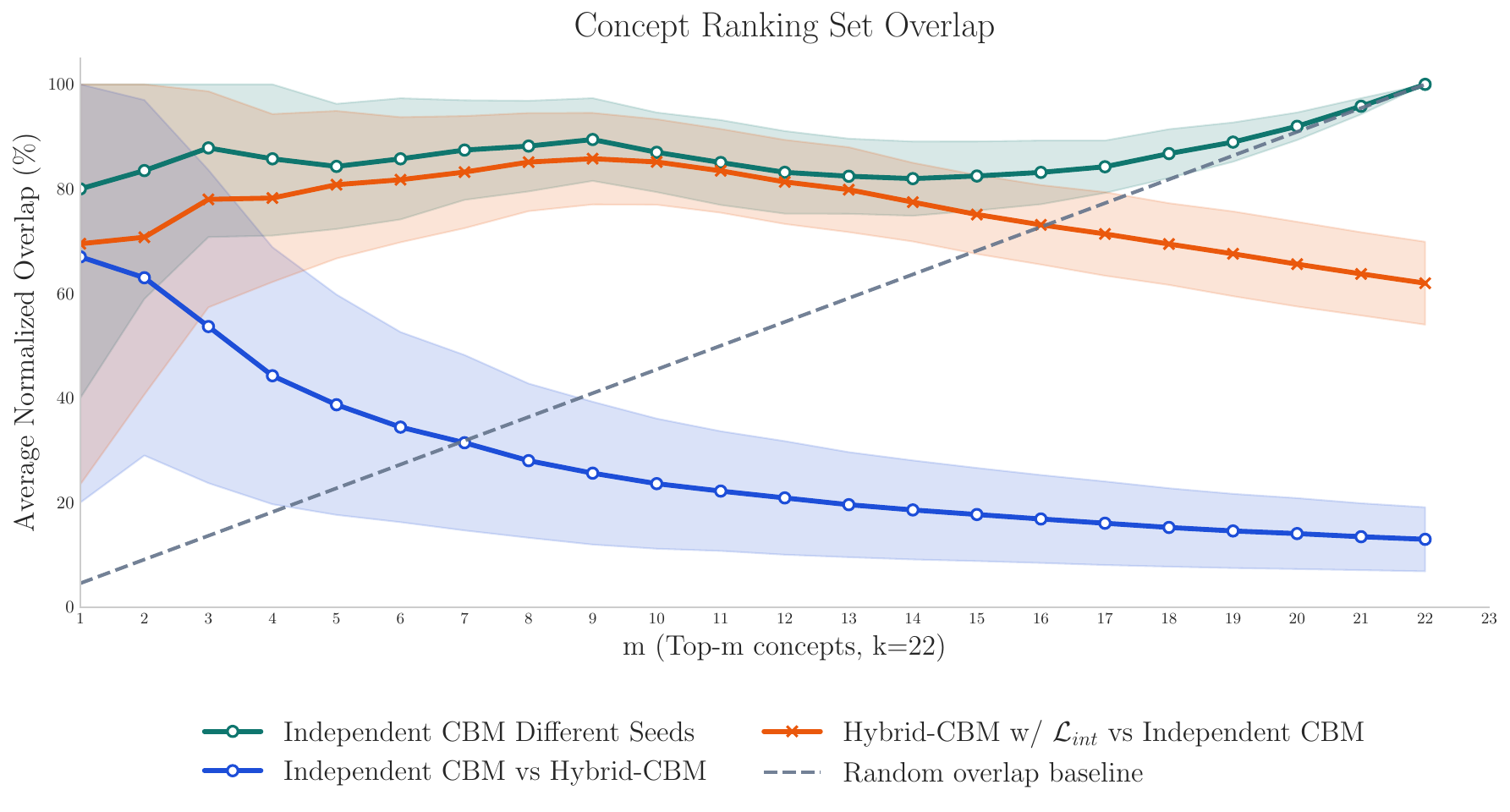}
  \caption{Top-$m$ weight overlap for (1) a properly-conditioned Hybrid CBM $M_H$ and a non-leaky Independent CBM $M_I$ (\textcolor{orange}{orange}); (2) two Independent CBMs trained with different seeds (\textcolor{ForestGreen}{green}); (3) an ill-conditioned Hybrid CBM and a non-leaky Independent CBM $M_I$ (\textcolor{Cerulean}{blue}); and (4) a random bottleneck subset and Independent CBM $M_I$ (\textcolor{gray}{gray} dashed line, random baseline).}
  \label{fig:topm_overlap}
\end{figure}


\end{document}